\newcommand{\powerset}[1]{\mathcal{P}(#1)}
\newcommand{\be}{\begin{itemize}}
\newcommand{\ee}{\end{itemize}}
\newcommand{\inputdomain}{{\rm D}}
\newcommand{\manipulation}{\delta}
\newcommand{\inputImage}{\alpha}
\newcommand{\distance}[2]{||#2||_{#1}}
\newcommand{\nature}{{\tt nat}}
 \newcommand{\lipschitzConstant}{\hbar}
\newcommand{\classChangeDistance}{\ell}
\title{\textbf{Feature-Guided Black-Box Safety Testing of Deep Neural Networks}}
\author{
  Matthew Wicker\inst{1} \and Xiaowei Huang\inst{2} \and Marta Kwiatkowska\inst{3}
 }
  \institute{
  University of Georgia, USA,
  \email{matthew.wicker25@uga.edu} 
  \and 
  University of Liverpool, UK,
  \email{xiaowei.huang@liverpool.ac.uk} 
   \and 
  University of Oxford, UK,
  \email{marta.kwiatkowska@cs.ox.ac.uk}\\
}
\date{}
\begin{document}
\maketitle
\begin{abstract}

Despite the improved accuracy of deep neural networks, the discovery of adversarial examples has raised serious safety concerns.
Most existing approaches for crafting adversarial examples necessitate some knowledge (architecture, parameters, etc) of the network at hand. In this paper, we focus on image classifiers and propose a \emph{feature-guided} black-box approach 
to test the safety of deep neural networks that requires no such knowledge. 
Our algorithm employs object detection techniques such as SIFT (Scale Invariant Feature Transform) to extract features from an image. These features are converted into a mutable saliency distribution, where high probability is assigned to pixels that affect the composition of the image with respect to the human visual system. We formulate the crafting of adversarial examples as a two-player turn-based stochastic game, where the first player's objective is to minimise the distance to an adversarial example by manipulating the features, and the second player can be cooperative, adversarial, or random. 
We show that, theoretically, the two-player game can converge to the optimal strategy, and that the optimal strategy represents a globally minimal adversarial image.
For Lipschitz networks, we also identify conditions that provide safety guarantees that no adversarial examples exist.
Using Monte Carlo tree search we gradually explore the game state space to search for adversarial examples. 
Our experiments show that, despite the black-box setting, manipulations guided by a perception-based saliency distribution are competitive 
with state-of-the-art methods that rely on white-box saliency matrices or sophisticated optimization procedures. 
Finally, we show how our method can be used to evaluate robustness of neural networks in safety-critical applications such as traffic sign recognition in self-driving cars.

\end{abstract}

\section{Introduction}

\newcommand{\terminate}{t}
\newcommand{\reward}{R}
\newcommand{\playerOne}{{\tt I}}
\newcommand{\playerTwo}{{\tt II}}
\newcommand{\opt}{{\tt opt}}

Deep neural networks (DNNs or networks, for simplicity) have been developed  for 
a variety of tasks, including malware detection \cite{malware}, abnormal network activity detection \cite{ryan:nips10}, and self-driving cars \cite{NVIDIA,road-segmentation,traffic-classification-lecun}. A classification network $N$ can be  used as a decision-making algorithm: given an input $\inputImage$, it suggests a decision $N(\inputImage)$ among a set of possible decisions. While the accuracy of neural networks has greatly improved, matching the cognitive ability of humans~\cite{LBH2015}, they are susceptible to adversarial examples~\cite{Biggio2013,SZSBEGF2014}.  An adversarial example is an input which, though initially classified correctly, is misclassified after a minor, perhaps imperceptible, perturbation. 
Adversarial examples pose challenges for 
self-driving cars, where neural network solutions have been proposed for tasks such as end-to-end steering \cite{NVIDIA}, road segmentation \cite{road-segmentation}, and traffic sign classification \cite{traffic-classification-lecun}. 
In the context of steering and road segmentation, an adversarial example may cause a car to steer off the road or drive into barriers, and misclassifying traffic signs may cause a vehicle to drive into oncoming traffic. 
Fig. \ref{fig:coverimage} shows an image of a traffic light correctly classified by a state-of-the-art network which is then misclassified after only a few pixels have been changed.
Though somewhat artificial, since in practice the controller would rely on additional sensor input whem making a decision,
such cases strongly suggest that, before deployment in safety-critical tasks, DNNs resilience (or robustness) to adversarial examples must be strengthened.
\begin{wrapfigure}{r}{0.6\textwidth}\label{fig:yolo}
    \centering
    
    \includegraphics[width=0.6\textwidth]{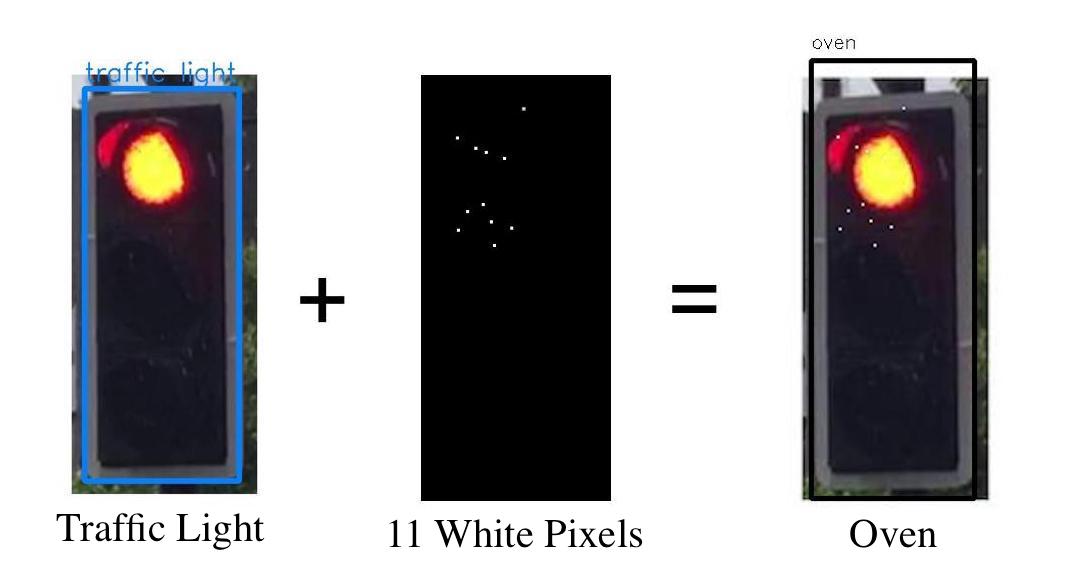}
    
    \caption{An adversarial example for the YOLO object recognition network. 
    }
    \label{fig:coverimage}
\end{wrapfigure}
A number of approaches have been proposed to search for adversarial examples (see Related Work). They are based on computing the
gradients~\cite{FGSM}, along which a heuristic search moves; computing a Jacobian-based saliency map~\cite{JSMA}, based on which pixels are selected to be changed; transforming the existence of adversarial examples into an optimisation problem~\cite{CW-Attacks}, on which an optimisation algorithm can be applied; transforming the existence of adversarial examples into a constraint solving problem~\cite{KBDJK2017}, on which a constraint solver can be applied; or discretising the
neighbourhood of a point and searching it exhaustively in a layer-by-layer manner~\cite{DLV}. 
All these approaches assume some knowledge about the network, e.g., the architecture or the parameters, 
which can vary as the network continuously learns and adapts to new data, and, with a few exceptions \cite{practical-black-box} that access the penultimate layer, do not explore the feature maps of the networks.

In this paper, we propose a \emph{feature-guided} approach to test the resilience of image classifier networks against adversarial examples.
While convolutional neural networks (CNN) have been  successful in classification tasks, their feature extraction capability is not well understood~\cite{YCNFL2015}. 
The discovery of adversarial examples has 
called into question CNN's ability to robustly handle input with diverse structural and compositional elements. 
On the other hand, state-of-the-art feature extraction methods are able to deterministically and efficiently extract structural elements of an image regardless of scale, rotation or transformation. A key observation of this paper is that feature extraction methods  enable us to identify elements of an image which are most vulnerable to a visual system such as a CNN.

{Leveraging knowledge of the human perception system, existing object detection techniques}
{detect instances of semantic objects of a certain class} (such as animals, buildings, or cars) in digital images and videos by identifying their features. We use the scale-invariant feature transform approach, or SIFT~\cite{SIFT}, to detect features, which is achieved with no knowledge of the network in a \emph{black-box} manner. 
Using the SIFT features, whose number is much smaller than the number of pixels, we represent the image as a two-dimensional Gaussian mixture model. This reduction in dimensionality allows us to efficiently target the exploration at salient features, similarly to human perception.
We formulate the process of crafting adversarial examples as a two-player turn-based stochastic game, where player $\playerOne$ selects features and player $\playerTwo$ then selects pixels within the selected features and a manipulation instruction. After both players have made their choices, the image is modified according to the manipulation instruction, 
and the game continues.
While player $\playerOne$ aims to minimise the distance to an adversarial  example, player $\playerTwo$  can be cooperative, adversarial, or nature who samples the pixels according to the Gaussian mixture model. 
We show that, theoretically, the two-player game can converge to the optimal strategy, and that the optimal strategy represents a globally minimal adversarial image. We also consider safety guarantees for Lipschitz networks and identify conditions to ensure that no adversarial examples exist.

We implement a software package\footnote{The software package and all high-resolution figures used in the paper are available from  \url{https://github.com/matthewwicker/SafeCV}}, in which 
a Monte Carlo tree search (MCTS) algorithm is employed to find asymptotically optimal strategies for both  players, with player $\playerTwo$ being a cooperator. The algorithm is 
\emph{anytime}, meaning that it can be terminated with time-out bounds provided by the user and, when terminated, it returns the best strategies it has for both players. The experiments on networks trained on benchmark datasets such as MNIST~\cite{MNIST} and CIFAR10~\cite{cifar10} show that, even without the knowledge of the network and using relatively little time (1 minute for every image), the algorithm can already achieve competitive performance against existing adversarial example crafting algorithms. 
We also experiment on several state-of-the-art networks, including the winner of the Nexar traffic light challenge~\cite{NexarData}, a real-time object detection system YOLO, and VGG16~\cite{VGG16} for ImageNet competition, where, surprisingly, we show that the algorithm can return adversarial examples even with very limited resources (e.g., running time of \emph{less than a second}), including that in Fig.~\ref{fig:coverimage} from YOLO. Further, since the SIFT method is scale and rotation invariant, we can counter claims in the recent paper~\cite{NOneedtoworry} that adversarial examples are not invariant to changes in scale or angle in the physical domain.

Our software package is well suited to safety testing and decision support for DNNs in safety-critical applications. First, the  MCTS algorithm can be used \emph{offline} to evaluate the network's robustness against adversarial examples on a given set of  images. The asymptotic optimal strategy achievable by MCTS algorithm enables a theoretical guarantee of safety, i.e., the network is safe when the algorithm cannot find adversarial examples. 
The algorithm is guaranteed to terminate, but this may be impractical, so we provide an alternative termination criterion.
Second, 
the MCTS algorithm, in view of its time efficiency, has the potential to be deployed on-board for \emph{real-time} decision support.

An extended version of the paper, which includes more additional explanations and experimental results, is available from \cite{WHK2017}.

\section{Preliminaries}\label{sec:preliminaries}

\newcommand{\severity}{sev}
\newcommand{\expectation}{{\tt E}}
\newcommand{\GaussianMixture}{{\cal G}}
\newcommand{\feature}{\lambda}
\newcommand{\setoffeatures}{\Lambda}
\newcommand{\strategy}{\sigma}

Let $N$ be a network with a set $C$ of classes. Given an input $\inputImage$ and a class $c \in C$, we use $N(\inputImage,c)$ to denote the confidence (expressed as a probability value obtained from normalising the score) of $N$ believing  that $\inputImage$ is in class $c$. Moreover, we write $N(\inputImage) = \arg\max_{c\in C} N(\inputImage,c)$ for the class into which $N$ classifies $\inputImage$. 
For our discussion of image classification networks, the input domain $\inputdomain$ is a vector space, which in most cases can be represented as 
${\rm I\!R_{[0,255]}^{w\times h\times ch}}$, where $w,h,ch$ are the width, height, and number of channels of an image, respectively, and we let $P_0 = w\times h\times ch$ be the set of input dimensions. 
In the following, we may refer to an element in $w\times h$ as a pixel and an element in $P_0$ as a dimension. 
We remark that dimensions are normalised as real values in $[0,1]$. Image classifiers employ a distance function to compare images. Ideally, such a distance should reflect perceptual similarity between images, comparable to human perception. However, in practice $L_k$ distances are used instead,
typically
$L_0$, $L_1$ (Manhattan distance), $L_2$ (Euclidean distance), and $L_\infty$ (Chebyshev distance). 
We also work with $L_k$ distances but emphasise that our method can be adapted to other distances.
In the following, we write $\distance{k}{\inputImage_1-\inputImage_2}$ with $k\geq 0$ for the distance between two images $\inputImage_1$ and $\inputImage_2$ with respect to the $L_k$ measurement.

Given an image $\inputImage$, a distance measure $L_k$, and a distance $d$, we define $\eta(\inputImage,k,d)=\{\inputImage' ~|~ \distance{k}{\inputImage'-\inputImage} \leq d\}$
as the set of points whose distance to $\inputImage$ is no greater than $d$ with respect to $L_k$.  
Next we define adversarial examples, as well as what we mean by targeted and non-targeted safety.

\begin{definition}\label{def:constraints}
Given an input $\inputImage\in\inputdomain$, a distance measure $L_k$ for some $k\geq 0$, and a distance $d$, an \emph{adversarial example} $\inputImage'$ of class $c\neq N(\inputImage)$ is such that 
$\inputImage'\in \eta(\inputImage,k,d)$, 
$N(\inputImage)\neq N(\inputImage')$, and 
$N(\inputImage')=c$. 
Moreover, we write $adv_{N,k,d}(\inputImage,c)$ for the set of adversarial examples of class $c$ and let $adv_{N,k,d}(\inputImage)=\bigcup_{c\in C, c\neq N(\inputImage)}adv_{N,k,d}(\inputImage,c)$. A \emph{targeted safety} of class $c$ is defined as $adv_{N,k,d}(\inputImage,c)=\emptyset$, and a \emph{non-targeted safety} is defined as $adv_{N,k,d}(\inputImage)=\emptyset$.  

\end{definition}

\smallskip\noindent{\bf Feature Extraction}
The Scale Invariant Feature Transform (SIFT) algorithm \cite{SIFT}, 
a reliable technique for exhuming features from an image, makes object localization and tracking possible without the use of neural networks. Generally, the SIFT algorithm proceeds through the following steps: scale-space extrema detection (detecting relatively darker or lighter areas in the image), keypoint localization (determining the exact position of these areas), and keypoint descriptor assignment (understanding the context of the image w.r.t its local area).
\begin{figure}[h]
    \centering
    \includegraphics[width=\textwidth]{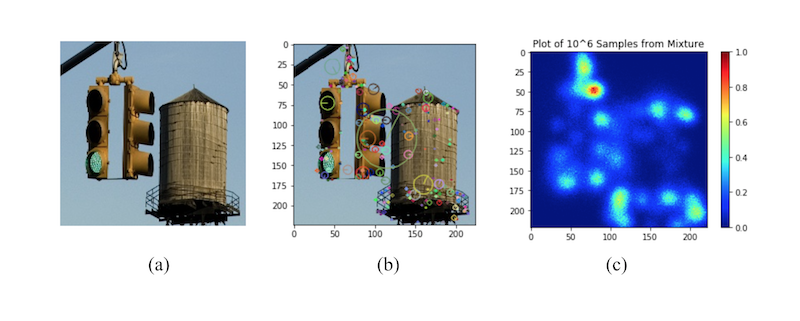}
    \caption{Illustration of the transformation of an image into a saliency distribution. (a) The original image $\inputImage$, provided by ImageNet. (b) The image marked with relevant keypoints $\setoffeatures(\inputImage)$. 
    (c) The heatmap of the Gaussian mixture model $\GaussianMixture(\setoffeatures(\inputImage))$.}
    \label{fig:imtosal2}

\end{figure}
\newcommand{\LKdistance}{D_{\text{KL}}}
Human perception of an image or an object can be reasonably represented as a set of features (referred to as keypoints in SIFT) of different sizes and response strengths, see \cite{Szeliski2010} and Appendix of \cite{WHK2017} for more detail.  
Let $\setoffeatures(\inputImage)$ be a set of features  of the image $\inputImage$ such that each feature $\feature\in \setoffeatures(\inputImage)$ is a tuple $(\feature_x,\feature_y,\feature_s,\feature_r)$, where $(\lambda_x,\lambda_y)$ is the coordinate of the feature in the image, $\feature_s$ is the size of the feature, and $\feature_r$ is the response strength of the feature. 
The SIFT procedures implemented in standard libraries such as OpenCV may return more information which we do not use.

{On their own, keypoints are not guaranteed to involve every pixel in the image, and in order to ensure a comprehensive and flexible safety analysis, we utilize these keypoints as a basis for a Guassian mixture model.}
Fig.~\ref{fig:imtosal2} shows the original image (a) and this image annotated with keypoints (b).

\smallskip\noindent{\bf Gaussian Mixture Model}
Given an image $\inputImage$ and its set $\setoffeatures(\inputImage)$ of keypoints, we define for $\lambda_i\in \setoffeatures(\inputImage)$
a two-dimensional Gaussian distribution $\GaussianMixture_i$ such that, for pixel $(p_x,p_y)$, we have 
\begin{equation}
\GaussianMixture_{i,x} = \dfrac{1}{\sqrt{2\pi\lambda_{i,s}^{2}}}                exp\big(\dfrac{-(p_x-\lambda_{i,x})^2}{2\lambda_{i,s}^{2}}\big)~~~~~
\GaussianMixture_{i,y} = \dfrac{1}{\sqrt{2\pi\lambda_{i,s}^{2}}}                exp\big(\dfrac{-(p_y-\lambda_{i,y})^2}{2\lambda_{i,s}^{2}}\big) \label{equation:gaussian}
\end{equation}
where the variance is the size $\lambda_{i,s}$ of the keypoint and the mean is its location $(\lambda_{i,x}, \lambda_{i,y})$. To complete the model, we define a set of weights $\Phi = \{\phi_i\}_{ i \in \{1,2,...,k\} }$ such that $k=|\setoffeatures(\inputImage)|$ and 
$\phi_i = \lambda_{i,r}/\sum_{j=0}^{k}\lambda_{j,r} $.
Then, we can construct a Gaussian mixture model $\GaussianMixture$ by combining the distribution components with the weights as coefficients, i.e., 
$\GaussianMixture_{x} = \prod_{i=1}^k \phi_i\times \GaussianMixture_{i,x}$ and 
$\GaussianMixture_{y} = \prod_{i=1}^k \phi_i\times \GaussianMixture_{i,y}$.
{The two-dimensional distributions 
are discrete and separable and therefore their realization is tractable and independent, which improves efficiency of computation.}
Let $\GaussianMixture(\setoffeatures(\inputImage))$ be the obtained Gaussian mixture model from $\setoffeatures(\inputImage)$, and $G$ be the set of Gaussian mixture models.
In Fig.~\ref{fig:imtosal2} we illustrate the transformation of an image into a saliency distribution.

\newcommand{\instruction}{i}
\newcommand{\instructionset}{I}

\smallskip\noindent{\bf Pixel Manipulation}
We now define the operations that we consider for manipulating images.
We write $\inputImage(x,y,z)$ for the value of the $z$-channel {(typically RGB or grey-scale values)} of the pixel positioned at $(x,y)$ on the image $\inputImage$. Let $\instructionset=\{+,-\}$ be a set of manipulation instructions and $\tau$ be a positive real number representing the manipulation magnitude, then we can define pixel manipulations $\manipulation_{X,\instruction}: \inputdomain \rightarrow \inputdomain$ for $X$ a subset of input pixels  
and $\instruction\in \instructionset$:  
$$
\manipulation_{X,\instruction}(\inputImage)(x,y,z) = \left\{
\begin{array}{ll}
\inputImage(x,y,z) + \tau, & \text{if } (x,y)\in X \text{ and } \instruction = + \\
\inputImage(x,y,z) - \tau, & \text{if } (x,y)\in X \text{ and } \instruction = - \\
\inputImage(x,y,z)  & \text{otherwise}\\
\end{array}
\right.
$$
for all pixels $(x,y)$ and channels $z\in \{1,2,3\}$.
Note that if the values are bounded, e.g., $[0,1]$,  $\manipulation_{X,\instruction}(\inputImage)(x,y,z)$ needs to be restricted to be within the bounds. For simplicity, in our experiments and comparisons we allow a manipulation to choose either  the upper bound or the lower bound with respect to the instruction $\instruction$. For example, in Fig.~\ref{fig:coverimage}, the actual  manipulation considered is to make the manipulated dimensions choose value 1.

\section{Safety Against Manipulations}
\label{SafetyThms}

Recall that every image represents a point in the input vector space $\inputdomain$. Most existing investigations of the safety (or robustness) of DNNs 
focus on optimising the movement of a point along the gradient direction of some function obtained from the network (see Related Work for more detail). Therefore, these approaches rely on the knowledge about the DNN. Arguably, this reliance holds also for the black-box approach proposed in \cite{practical-black-box}, which uses  
a new surrogate network trained on the  data sampled from the original network. 
Furthermore, the current understanding about the transferability of adversarial examples (i.e., an adversarial example found for a network can also serve as an adversarial example for another network, trained on different data) are all based on empirical experiments~\cite{practical-black-box}. The conflict between the understanding of transferability and  existing  approaches to crafting adversarial examples can be gleaned from an observation made in \cite{LCLS2017} that gradient directions of different models are orthogonal to each other. A reasonable interpretation is that transferable adversarial examples, if they exist, do not rely on the gradient direction suggested by a network but instead may be specific to the input. 

In this paper, we propose a \emph{feature-guided} 
approach 
which, instead of using the gradient direction as the guide for optimisation, relies on searching fro adversarial examples by targeting and manipulating image features as recognised by human perception capability.
We extract features using SIFT, which is a reasonable proxy for human perception and enables dimensionality reduction through the Gaussian mixture representation (see \cite{Reynolds2009GaussianMM}).
Our method
needs neither the knowledge about the network nor the necessity to massively sample the network for data to train a new network, and is therefore a \emph{black-box} approach.

\smallskip\noindent{\bf Game-based Approach}
We formulate the search for adversarial examples as a two-player turn-based stochastic game, where player $\playerOne$ selects features and player $\playerTwo$ then selects pixels within the selected features and a manipulation instruction. 
While player $\playerOne$ aims to minimise the distance to an adversarial  example, player $\playerTwo$  can be cooperative, adversarial, or nature who samples the pixels according to the Gaussian mixture model. 
To give more intuition for feature-guided search, in Appendix of~\cite{WHK2017} 
we demonstrate how the  distribution of the Gaussian mixture model representation evolves for different adversarial examples. 

We define the objective function in terms of the $L_k$ distance and view the distance to an adversarial example as a measure of its severity. Note that the sets $adv_{N,k,d}(\inputImage,c)$ and $adv_{N,k,d}(\inputImage)$ of adversarial examples can be infinite.

\begin{definition} \label{def:objective}
  Among all adversarial examples in the set $adv_{N,k,d}(\inputImage,c)$  (or $adv_{N,k,d}(\inputImage)$), find $\inputImage'$ with the minimum distance to the original image $\inputImage$: 
\begin{equation}\label{equ:objective}
\arg\min_{\inputImage'} \{ \severity_\inputImage(\inputImage')~|~\inputImage' \in adv_{N,k,d}(\inputImage,c) (\text{or } adv_{N,k,d}(\inputImage))\}
\end{equation}
where $\severity_\inputImage(\inputImage') = \distance{k}{\inputImage - \inputImage'}$ is the \emph{severity} of the adversarial example $\inputImage'$ against the original image $\inputImage$.
\end{definition}

\noindent
We remark that the choice of $L_k$ will affect perceptual similarity, see Appendix of~\cite{WHK2017}.

\smallskip\noindent{\bf Crafting Adversarial Examples as a Two-Player Turn-Based Game}
Assume two players $\playerOne$ and $\playerTwo$.  
Let $M(\inputImage,k,d)=(S\cup (S\times \setoffeatures(\inputImage)),s_0,\{T_a\}_{a \in \{\playerOne,\playerTwo\}},L)$ be a game model, where $S$ is a set of game states belonging to player $\playerOne$ such that each state represents an image in $ \eta(\inputImage,k,d)$, and $S\times \setoffeatures(\inputImage)$ is a set of game states belonging to player $\playerTwo$ where 
$\setoffeatures(\inputImage)$ is a set of features (keypoints) of image $\inputImage$. We write $\inputImage(s)$ for the image associated to the state $s\in S$. $s_0\in S$ is the initial game state such that $\inputImage(s_0)$ is the original image $\inputImage$. 
The transition relation 
$T_\playerOne: S \times\setoffeatures(\inputImage)\rightarrow S \times\setoffeatures(\inputImage)$ is defined as  $T_\playerOne(s,\feature)=(s,\feature)$, and 
transition relation 
$T_\playerTwo: (S \times\setoffeatures(\inputImage))\times \powerset{P_0}\times \instructionset\rightarrow S$ is defined as $T_\playerTwo((s,\feature),X,\instruction)=\manipulation_{X,i}(\inputImage(s))$, where $\manipulation_{X,i}$  is a pixel manipulation defined in Section~\ref{sec:preliminaries}.
Intuitively, on every game state $s\in S$, player $\playerOne$ will choose a keypoint $\feature$, and, in response to this, player $\playerTwo$ will choose a pair $(X,\instruction)$,  where $X$ is a set of input dimensions and $\instruction$ is a manipulation instruction.  The labelling function $L:S\cup (S\times \setoffeatures(\inputImage))\rightarrow C\times G$ assigns to each state $s$ or $(s,\feature)$ a class $N(\inputImage(s))$ and a two-dimensional Gaussian mixture model $\GaussianMixture(\setoffeatures(\inputImage(s)))$. 

A path (or game play) of the game model is a sequence $s_1u_1s_2u_2...$ of game states such that,  for all $k\geq 1$, we have $u_k = T_{\playerOne}(s_k,\feature_k)$ for some feature $\feature_k$ and $s_{k+1}=T_\playerTwo((s_k,\feature_k),X_k,\instruction_k)$ for some $(X_k,\instruction_k)$. Let $last(\rho)$ be the last state of a finite path $\rho$ and $Path_a^F$ be the set of finite paths such that $last(\rho)$ belongs to player $a\in \{\playerOne,\playerTwo\}$. 
A stochastic strategy $\strategy_\playerOne: Path_\playerOne^F\rightarrow \mathcal{D}(\setoffeatures(\inputImage))$ of player $\playerOne$ maps each finite paths to a distribution over the next actions, and similarly for 
$\strategy_\playerTwo:Path_\playerTwo^F\rightarrow \mathcal{D}(\powerset{P_0}\times \instructionset)$ for  player $\playerTwo$.
We call $\strategy = (\strategy_\playerOne,\strategy_\playerTwo)$ a strategy profile. In this section, we only discuss targeted safety for a given target class $c$ (see Definition~\ref{def:constraints}). All the notations and results can be easily adapted to work with non-targeted safety. 

In the following, we define a reward $\reward(\strategy,\rho)$ for a given strategy profile $\strategy = (\strategy_\playerOne,\strategy_\playerTwo)$ and a finite path $\rho\in \bigcup_{a\in\{\playerOne,\playerTwo\}}Path_a^F$. 
The idea of the reward is to accumulate a measure of severity of the adversarial example found over a path. 
Note that, given $\strategy$, the game becomes a fully probabilistic system. 
Let $\inputImage_\rho' = \inputImage(last(\rho))$  be  the image associated with the last state of the path $\rho$. We write $\terminate(\rho)$ for the expression $N(\inputImage_\rho')=c \lor \distance{k}{ \inputImage_\rho'-\inputImage} > d$, representing that the path has reached a state whose associated image either is in the target class $c$ or lies outside the region $ \eta(\inputImage,k,d)$. The path $\rho$ can be  terminated whenever $\terminate(\rho)$ is satisfiable. It is not hard to see that, due to the constraints in Definition~\ref{def:constraints}, every infinite path has a finite prefix which can be terminated. Then we define the reward function $\reward(\strategy,\rho) = $
$$
\left\{
\begin{array}{lll}
1/ \severity_\inputImage(\inputImage_\rho') & \text{if } \terminate(\rho) \text{ and }\rho\in Path_\playerOne^F \\
\sum_{\feature \in \setoffeatures(\inputImage)} \strategy_\playerOne(\rho)(\feature) \cdot \reward(\strategy,\rho T_\playerOne(last(\rho),\feature)) & \text{if } \neg\terminate(\rho) \text{ and }\rho\in Path_\playerOne^F \\
 \sum_{(X,\instruction)\in \powerset{P_0}\times \instructionset} \strategy_\playerTwo(\rho )(X,\instruction) \cdot \reward(\strategy,\rho T_\playerTwo(last(\rho),X,\instruction)) & \text{if } \rho\in Path_\playerTwo^F
\end{array} 
\right. 
$$  \label{rewardFunction}
\noindent
where 
$\strategy_\playerOne(\rho)(\feature)$ is the probability of selecting $\feature$ on $\rho$ by player $\playerOne$, and $\strategy_\playerTwo(\rho)(X,\instruction)$ is the probability of selecting $(X,\instruction)$ based on $\rho$ by player $\playerTwo$. We note that a path only terminates on player $\playerOne$ states. 

Intuitively, if an adversarial example is found then the reward assigned is the inverse of severity (minimal distance), and otherwise it is the weighted summation of the rewards if its children.  Thus, a strategy $\strategy_\playerOne$ to maximise the reward will need to minimise the severity  $\severity_\inputImage(\inputImage_\rho')$, the objective of the problem defined in Definition~\ref{def:objective}. 

\begin{definition}
The goal of the game is for player $\playerOne$ to choose a strategy $\strategy_{\playerOne}$ to maximise  the reward $\reward((\strategy_\playerOne,\strategy_\playerTwo),s_0) $ of the initial state $s_0$, based on the strategy $\strategy_{\playerTwo}$ of the player $\playerTwo$, i.e.,
\begin{equation}
\arg\max_{\strategy_{\playerOne}} \opt_{\strategy_{\playerTwo}} \reward((\strategy_\playerOne,\strategy_\playerTwo),s_0). 
\end{equation}
where option $\opt_{\strategy_{\playerTwo}}$ can be $\max_{\strategy_{\playerTwo}} $, $\min_{\strategy_{\playerTwo}} $, or $\nature_{\strategy_{\playerTwo}}$, according to which player $\playerTwo$ acts as a cooperator,  an adversary, or nature who samples the distribution $\GaussianMixture(\setoffeatures(\inputImage))$ for pixels and randomly chooses the manipulation instruction. 
\end{definition}

A strategy $\sigma$ is called deterministic if $\sigma(\rho)$ is a Dirac distribution, and is called memoryless if $\sigma(\rho)=\sigma(last(\rho))$ for all finite paths $\rho$. We have the following result. 
\begin{theorem}\label{sec:det}
Deterministic and memoryless strategies suffice for player $\playerOne$, when $\opt_{\strategy_{\playerTwo}} \in \{\max_{\strategy_{\playerTwo}}, \min_{\strategy_{\playerTwo}}, \nature_{\strategy_{\playerTwo}}\}$. 
\end{theorem}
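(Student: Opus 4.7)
The plan is to observe that the game is effectively a finite-state turn-based stochastic game with a reachability-style payoff collected at terminal states, and then apply classical results on such games to extract a deterministic memoryless optimal strategy for $\playerOne$.

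First, I would establish finiteness of the reachable state space from $s_0$. Every application of $\manipulation_{X,\instruction}$ alters chosen pixel channels by $\pm \tau$ (clipped to $[0,1]$), so reachable pixel values lie in a finite arithmetic grid of step $\tau$; restricting to the bounded set $\eta(\inputImage,k,d)$ yields finitely many reachable states. Since $\setoffeatures(\inputImage)$ and $\powerset{P_0}\times \instructionset$ are also finite, the game is a finite-state, finite-action turn-based stochastic game. Next, I would recast the reward as the value of a Bellman dynamic program. Write $\terminate(s)$ for the predicate $N(\inputImage(s))=c \lor \distance{k}{\inputImage(s)-\inputImage}>d$, which depends only on $s$. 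Define $V(s)=1/\severity_\inputImage(\inputImage(s))$ at terminal states, $V(s)=\max_{\feature\in\setoffeatures(\inputImage)} W(s,\feature)$ at non-terminal $\playerOne$-states, and $W(s,\feature)=\opt_{(X,\instruction)} V(\manipulation_{X,\instruction}(\inputImage(s)))$ at $\playerTwo$-states, with $\opt$ instantiated as $\max$, $\min$, or expectation under nature's Gaussian-mixture distribution. The paper's observation that every infinite play admits a terminable finite prefix, together with boundedness of $1/\severity_\inputImage$ on the reachable grid, guarantees that these Bellman equations have a unique solution $V$ equal to the game value at $s_0$.

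I would then read off the desired strategy: at each $\playerOne$-state $s$, pick any $\feature^\star(s) \in \arg\max_{\feature} W(s,\feature)$ with Dirac mass, independently of the path leading to $s$; this is memoryless and deterministic by construction. Optimality follows from classical results on finite turn-based games. When $\opt$ is $\max$ or nature's expectation, the game collapses to a finite MDP for $\playerOne$, which always admits deterministic memoryless optimal policies. When $\opt$ is $\min$, the game is a finite zero-sum turn-based stochastic game, whose value is achieved by a pair of deterministic memoryless strategies via Shapley's theorem (equivalently, by backward induction using $V$). In all three cases the $\arg\max$ over the finite set $\setoffeatures(\inputImage)$ is attained, so $\strategy_\playerOne$ is well-defined.

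The main subtle point is to justify the Bellman argument rigorously given that the game graph is not a DAG: successive $+/-$ manipulations can revisit states. This is handled by the reachability/shortest-path reformulation, where only terminal states carry (uniformly bounded) payoff, and the termination property ensures that under any optimal response of $\playerTwo$ play reaches a terminal state with probability one. Thus $V$ is the unique fixed point of the Bellman operator and the greedy memoryless deterministic policy is optimal for $\playerOne$, which is the claim of the theorem.
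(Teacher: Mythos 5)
The paper as given does not actually contain a proof of this theorem: the appendix announces proofs of ``the two theorems in Section~\ref{sec:mcts}'' and supplies arguments only for Theorems~3 and~4, so there is nothing to compare your argument against line by line. That said, your route is the natural one and is essentially sound. You correctly reduce the problem to positional determinacy of a finite turn-based game with terminal payoffs: the reachable images form a finite $\tau$-grid inside the bounded region $\eta(\inputImage,k,d)$, the action sets $\setoffeatures(\inputImage)$ and $\powerset{P_0}\times\instructionset$ are finite, the terminal reward $1/\severity_\inputImage$ is bounded on reachable terminal states (any reachable state other than $s_0$ is at distance at least one $\tau$-manipulation from $\inputImage$), and the cases $\max$, $\nature$, and $\min$ collapse respectively to a single-agent optimisation, a finite MDP against a fixed state-dependent environment policy, and a finite zero-sum turn-based stochastic game, each of which admits deterministic memoryless optima. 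The one point you should pin down rather than wave at is the well-definedness of $\reward$ on plays that never satisfy $\terminate$: since manipulations can be undone ($+\tau$ followed by $-\tau$), the game graph has cycles, and the paper's assertion that every infinite path has a terminable prefix does not obviously hold; in particular, when $\opt_{\strategy_{\playerTwo}}=\min_{\strategy_{\playerTwo}}$ your claim that ``under any optimal response of $\playerTwo$ play reaches a terminal state with probability one'' is exactly what an adversarial $\playerTwo$ might try to violate. The standard repair is to fix a value (e.g.\ $0$) for non-terminating plays, turning the payoff into a reachability/stochastic-shortest-path objective for which positional determinacy of finite turn-based stochastic games is a known result; with that convention stated explicitly, your Bellman fixed-point argument and the greedy extraction of a Dirac, history-independent $\strategy_\playerOne$ go through in all three cases.
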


\smallskip\noindent{\bf Complexity of the Problem}
As a by-product of Theorem~\ref{sec:det}, the theoretical complexity of the problem (i.e., determining whether $adv_{N,k,d}(\inputImage,c)=\emptyset$) is in PTIME, with respect to the size of the game model $M(\inputImage,k,d)$. However, even if we only consider finite paths (and therefore a finite system), the number of states (and therefore the size of the system) is $O(|P_0 |^{h})$ for $h$ the length of the longest finite path of the system without a terminating state. While the precise size of $O(|P_0|^{h})$ is dependent on the problem (including the image $\inputImage$ and the difficulty of crafting an adversarial example), it is roughly $O(50000^{100})$ for the images used in the ImageNet competition and $O(1000^{20})$ for smaller images such as CIFAR10 and MNIST. This is beyond the capability of existing approaches for exact or $\epsilon$-approximate computation of probability (e.g., reduction to linear programming, value iteration, and policy iteration, etc) that are used in probabilistic verification.

\section{Monte Carlo Tree Search for Asymptotically Optimal Strategy}\label{sec:mcts}

\newcommand{\update}{U\!pdate}
\newcommand{\pathsample}{Simulation}
\newcommand{\confidence}{con\!f}
\newcommand{\normalisation}{N\!orm}
\newcommand{\selection}{selection}
\newcommand{\expansion}{expansion}
\newcommand{\backPropogation}{backPropogation}

In this section, we present an approach based on Monte Carlo tree search (MCTS)~\cite{CWU2008} to find an optimal strategy asymptotically. We also we show that the optimal strategy, if achieved, represents the best adversarial example with respect to the objective  in Definition~\ref{def:objective}, under some conditions. 

We first consider the case of $\opt_{\strategy_{\playerTwo}}=\max_{\strategy_{\playerTwo}}$.
An MCTS algorithm, whose pseudo-code is presented in Algorithm~\ref{MCTS}, gradually expands a \emph{partial game tree} by sampling the strategy space of the model $M(\inputImage,k,d)$. With the upper confidence bound (UCB)~\cite{KS2006} as the exploration-exploitation tradeoff, MCTS has a theoretical guarantee that it converges to optimal solution when the game tree is fully explored.  
The algorithm mainly follows the standard MCTS procedure, with a few adaptations. We use two termination conditions $tc_1$
and $tc_2$ to control the pace of the algorithm. More specifically, $tc_1$ controls whether the entire procedure should be terminated, and $tc_2$ controls when a move should be made. The terminating conditions  can be, e.g., bounds on the number of iterations, etc. 
On the partial tree, every node maintains a pair $(r,n)$, which represents the accumulated reward $r$ and the number of visits $n$, respectively. 
The $\selection$ procedure travels from the root to a leaf according to an exploration-exploitation balance, i.e., UCB~\cite{KS2006}. {After $expanding$ the leaf node to have its children added to the partial tree}, we call  $\pathsample$ to run simulation on every child node. 
{A simulation on a new $node$ is a  play of the game from $node$ until it terminates. }
Players act randomly during the simulation. Every simulation terminates when reaching a terminated node $\inputImage'$, on which a reward $1/ \severity_\inputImage(\inputImage')$  can be computed. This reward is then $backpropagated$ from the new child node through its ancestors until reaching the root. 
Every time a new reward $v$ is backpropogated through a node, we update its associated pair to $(r+v,n+1)$. The $bestChild(root)$ returns the child of $root$ which has the highest value of $r/n$. 
\begin{algorithm}
\small
\caption{Monte Carlo Tree Search for $\opt_{\strategy_{\playerTwo}}=\max_{\strategy_{\playerTwo}}$}\label{MCTS}
\begin{algorithmic}[1]
\State \textbf{Input:} A game model $M(\inputImage,k,d)$, two termination conditions $tc_1$ and $tc_2$, a target class $c$
\State \textbf{Output:} An adversarial  example $\inputImage'$
\Procedure{MCTS($M(\inputImage,k,d), tc_1, tc_2, c$)}{}
\State $root \gets s_0$
\State \textbf{While}$(\neg tc_1)$: 
\State \quad \textbf{While}$(\neg tc_2)$: 
\State \quad \quad $leaf \gets \selection(root) $ 
\State \quad \quad $newnodes  \gets \expansion(M(\inputImage,k,d), leaf)$ 
\State \quad \quad \textbf{for} $node$ in $newnodes$: 
\State \quad \quad \quad $v \gets \pathsample(M(\inputImage,k,d),node,c) $
\State \quad \quad \quad $\backPropogation(node, v) $
\State \quad $root \gets bestChild(root)$
\State return $root$
\EndProcedure
\end{algorithmic}
\end{algorithm}
The other two cases are similar except for the choice of the next move (i.e., Line 12). Instead of choosing the best child, a child is chosen by sampling $\GaussianMixture(\setoffeatures(\inputImage))$ for the case of $\opt_{\strategy_{\playerTwo}}=\nature_{\strategy_{\playerTwo}}$, and the worst child is chosen for the case of $\opt_{\strategy_{\playerTwo}}=\min_{\strategy_{\playerTwo}}$. 
{We remark the game is not zero-sum when $\opt_{\strategy_{\playerTwo}}\in \{\nature_{\strategy_{\playerTwo}},\max_{\strategy_{\playerTwo}}\}$.}

\smallskip\noindent{\bf Severity Interval from the Game}
Assume that we have fixed termination conditions $tc_1$ and $tc_2$ and target class $c$. 
Given an option $\opt_{\strategy_{\playerTwo}}$ for player $\playerTwo$, we have an MCTS algorithm to compute an adversarial example $\inputImage'$. Let 
$\severity(M(\inputImage,k,d),\opt_{\strategy_{\playerTwo}})$ be $sev_{\inputImage}(\inputImage')$, where $\inputImage'$ is the returned adversarial example by running Algorithm \ref{MCTS} over the inputs $M(\inputImage,k,d)$, $tc_1$, $tc_2$, $c$ for a certain $\opt_{\strategy_{\playerTwo}}$. 
Then there exists a  severity interval $SI(\inputImage,k,d)$ with respect to the role of player $\playerTwo$: 
\begin{equation}
[\severity(M(\inputImage,k,d), \max_{\strategy_{\playerTwo}}), ~~\severity(M(\inputImage,k,d), \min_{\strategy_{\playerTwo}}) ].
\end{equation}
Moreover, we have that $\severity(M(\inputImage,k,d), \nature_{\strategy_{\playerTwo}})\in SI(\inputImage,k,d)$.

\newcommand{\hyperRectangle}{rec}
\newcommand{\direction}{\gamma}
\newcommand{\directions}{\Gamma}
\newcommand{\tauimage}{G}

\smallskip\noindent{\bf Safety Guarantee via Optimal Strategy}
Recall that $\tau$, a positive real number, is the manipulation magnitude used in pixel manipulations.
An  image $\inputImage' \in \eta(\inputImage,k,d)$ is a $\tau$-grid image if for all dimensions $p\in P_0$ we have $|\inputImage'(p)-\inputImage(p)| = n * \tau$ for some $n\geq 0$. Let $\tauimage(\inputImage,k,d)$ be the set of $\tau$-grid images in $\eta(\inputImage,k,d)$. 
First of all, we have the following conclusion for the case when player $\playerTwo$ is cooperative. 
\begin{theorem}\label{thm:taugrid}
Let $\inputImage' \in \eta(\inputImage,k,d)$ be any $\tau$-grid image such that $\inputImage' \in adv_{N,k,d}(\inputImage,c)${, where $c$ is the targeted class}. Then we have  that $ \severity_\inputImage(\inputImage') \geq \severity(M(\inputImage,k,d), \max_{\strategy_{\playerTwo}})$. 
\end{theorem}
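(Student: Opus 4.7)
The plan is to show that, under the cooperative setting $\opt_{\strategy_{\playerTwo}} = \max_{\strategy_{\playerTwo}}$, the joint strategy $(\strategy_\playerOne,\strategy_\playerTwo)$ that maximises the reward at $s_0$ realises a game play whose terminal image has the globally smallest severity among all $\tau$-grid adversarial examples in $\eta(\inputImage,k,d)$. Since the MCTS algorithm with UCB is known to converge to an optimal strategy in the limit, and since by Theorem~\ref{sec:det} it suffices to consider deterministic memoryless strategies, $\severity(M(\inputImage,k,d),\max_{\strategy_{\playerTwo}})$ coincides with this optimum. The claim $\severity_\inputImage(\inputImage') \geq \severity(M(\inputImage,k,d),\max_{\strategy_{\playerTwo}})$ then follows because $\inputImage'$ is merely one feasible $\tau$-grid adversarial example while $\severity(M(\inputImage,k,d),\max_{\strategy_{\playerTwo}})$ is the best (smallest) severity attainable.

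First I would establish a \emph{reachability lemma}: every $\tau$-grid adversarial image $\inputImage'\in\tauimage(\inputImage,k,d)\cap adv_{N,k,d}(\inputImage,c)$ is the terminal image of some finite path in $M(\inputImage,k,d)$. This is essentially combinatorial: because $\inputImage'$ is a $\tau$-grid image, for each dimension $p\in P_0$ we have $\inputImage'(p)-\inputImage(p) = n_p\cdot\tau$ for some integer $n_p$, so one can realise $\inputImage'$ by a sequence of $\sum_p |n_p|$ pixel manipulations $\manipulation_{X,\instruction}$, each applied to a single-pixel set $X$ lying inside some keypoint region (every pixel lies in the support of at least one Gaussian component, so player $\playerOne$ can legitimately select the relevant feature at each step). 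All intermediate images lie in $\eta(\inputImage,k,d)$ once we reorder the sequence so that the monotone distance-increasing moves are scheduled first, hence $t(\rho)$ is violated only at the very end, when $\inputImage_\rho' = \inputImage'$ is in the target class~$c$.

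Next I would combine reachability with the reward definition. Along the constructed path, the path terminates with reward $1/\severity_\inputImage(\inputImage')$ by the first case of the recursive definition of $\reward(\strategy,\rho)$. Consider the deterministic memoryless strategy profile $\strategy^*$ that, at each state along this path, assigns probability one to the required feature (player $\playerOne$) and the required pixel/instruction pair (cooperative player $\playerTwo$). Then $\reward(\strategy^*,s_0) = 1/\severity_\inputImage(\inputImage')$. Taking the supremum over all strategies in the cooperative setting, the optimal reward $\reward^* = \max_{\strategy_\playerOne}\max_{\strategy_\playerTwo}\reward(\strategy,s_0)$ satisfies $\reward^* \geq 1/\severity_\inputImage(\inputImage')$, i.e.\ $1/\reward^* \leq \severity_\inputImage(\inputImage')$.

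Finally, I would invoke the asymptotic optimality of MCTS with UCB on the finite game tree obtained by restricting to $\tau$-grid images (the relevant subtree is finite because $\tauimage(\inputImage,k,d)$ is finite and every path terminates once $t(\rho)$ holds). In the limit of the termination conditions $tc_1,tc_2$, the image returned by $bestChild(root)$ together with the recursive cooperative choices attains reward $\reward^*$, so $\severity(M(\inputImage,k,d),\max_{\strategy_{\playerTwo}}) = 1/\reward^*$. Chaining the two inequalities gives $\severity(M(\inputImage,k,d),\max_{\strategy_{\playerTwo}}) \leq \severity_\inputImage(\inputImage')$, as required. The main obstacle is the reachability step: one has to argue that the manipulation sequence realising $\inputImage'$ can always be scheduled so that every prefix stays inside $\eta(\inputImage,k,d)$ and so that player $\playerOne$ always has a legitimate feature to select at each turn; this relies on the Gaussian mixture having full support over the image grid and on monotonicity of the $L_k$ distance along coordinate-wise moves.
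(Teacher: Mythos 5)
Your proposal is correct and follows essentially the same route as the paper, whose stated proof idea for this theorem is precisely that every $\tau$-grid image in $\eta(\inputImage,k,d)$ is reachable by some game play, after which the bound follows because the cooperative optimum cannot be worse than the reward $1/\severity_\inputImage(\inputImage')$ of any particular reachable adversarial example. Your additional care about keeping prefixes inside $\eta(\inputImage,k,d)$ (each per-pixel move toward the target is distance-non-decreasing, so early termination can only yield a less severe example) and about identifying the algorithm's output with the game value via MCTS/UCB convergence fills in details the paper leaves implicit, but does not change the argument.
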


{Intuitively, the theorem says that the algorithm can find the optimal adversarial example from the set of $\tau$-grid images.} 
The idea of the proof is to show that every  $\tau$-grid image can be reached by some game play. {In the following, we show that, if the network is Lipschitz continuous, we need only consider $\tau$-grid images when $\tau$ is small enough. Then, together with the above theorem, we can conclude that our algorithm is both sound and complete. }

Further, we say that an image $\inputImage_1\in \eta(\inputImage,k,d)$ is a misclassification aggregator with respect to a number $\beta>0$ if,  for any $\inputImage_2\in \eta(\inputImage_1,1,\beta)$, we have that $N(\inputImage_2) \neq N(\inputImage)$ implies $N(\inputImage_1) \neq  N(\inputImage)$. {Intuitively, if a misclassification aggregator $\inputImage_1$ with respect to $\beta$ is classified correctly then all input images in $\eta(\inputImage_1,1,\beta)$ are classified correctly. We remark that the region $\eta(\inputImage_1,1,\beta)$ is defined with respect to the $L_1$ metric, but can also be defined using $L_{k'}$, some $k'$, without affecting the results if  $\eta(\inputImage,k,d) \subseteq \bigcup_{\inputImage_1\in \tauimage(\inputImage,k,d)}\eta(\inputImage_1,k',\tau/2)$}. Then we have the following theorem.

\begin{theorem}\label{thm:misclassification}
If all $\tau$-grid images are misclassification aggregators with respect to $\tau/2$, and $\severity(M(\inputImage,k,d), \max_{\strategy_{\playerTwo}}) > d$, then $adv_{N,k,d}(\inputImage,c)=\emptyset$. 
\end{theorem}

\noindent
{Note that $\severity(M(\inputImage,k,d), \max_{\strategy_{\playerTwo}}) > d$ means that none of  the $\tau$-images in $\eta(\inputImage,k,d)$ is an adversarial example. } The theorem suggests that, to achieve a complete safety verification, one may gradually decrease $\tau$ until either $\severity(M(\inputImage,k,d), \max_{\strategy_{\playerTwo}}) \leq d$, in which case we claim the network is unsafe,  or the condition that all $\tau$-grid images are misclassification aggregators with respect to $\tau/2$ is satisfiable, in which case we claim the network is safe. 
In the following, we discuss how to decide the largest $\tau$ for a Lipschitz network, in order to satisfy that condition and therefore achieve a complete verification using our approach. 

\begin{definition}
Network $N$ is a Lipschitz network with respect to the distance $L_k$ and a constant $\lipschitzConstant > 0 $ if, for all $\inputImage,\inputImage'\in \inputdomain$, we have $|N(\inputImage', N(\inputImage))-N(\inputImage, N(\inputImage))| < \lipschitzConstant \cdot \distance{k}{\inputImage' - \inputImage} $. 
\end{definition}

\noindent
Note that all networks whose inputs are bounded, including all image classification networks we studied, are Lipschitz networks. {Specifically, it is shown in~\cite{RHK2018} that most known types of layers, including fully-connected, convolutional, ReLU, maxpooling, sigmoid, softmax, etc., are Lipschitz continuous.} Moreover, we let $\classChangeDistance$ be the minimum confidence gap for a class change, i.e., 
$$\classChangeDistance = \min \{|N(\inputImage', N(\inputImage))-N(\inputImage, N(\inputImage))| ~~|~~  \inputImage,\inputImage'\in\inputdomain, 
 N(\inputImage')\neq N(\inputImage) \}.
$$
The value of $\classChangeDistance$ is in $[0,1]$, dependent on the network, and can be {estimated by examining all input examples $\inputImage'$ in the training and test data sets, or computed  with provable guarantees by  reachability analysis~\cite{RHK2018}. }
{The following theorem can be seen as an instantiation of Theorem~\ref{thm:misclassification} by using Lipschitz continuity with  $\tau \leq  \frac{2 \classChangeDistance}{ \lipschitzConstant}$ to implement the misclassification aggregator. }

\begin{theorem}
Let $N$ be a Lipschitz network with respect to $L_1$ and a constant $\lipschitzConstant $. Then, when $\tau \leq  \frac{2 \classChangeDistance}{ \lipschitzConstant}$ and $\severity(M(\inputImage,k,d), \max_{\strategy_{\playerTwo}}) > d$, we have that $adv_{N,k,d}(\inputImage,c)=\emptyset$. 
\end{theorem}

\smallskip\noindent{\bf $1/\epsilon$-convergence} 
Because we are working with a finite game, MCTS is guaranteed to converge when the game tree is fully expanded. In the worst case, it may take a very long time to converge. In practice, we can work with $1/\epsilon$-convergence by letting the program terminate when the current best adversarial example has  not been improved by finding a less severe one for  $\lceil 1/\epsilon \rceil$  iterations, where $\epsilon>0$ is a small real number.

\section{Experimental Results}\label{sec:mcexp}

For our experiments, we let player $\playerTwo$ be a cooperator, and its move $(X,\instruction)$ is such that for all $(x_1,y_1,z_1),(x_2,y_2,z_2) \in X$ we have $x_1=x_2$ and $y_1=y_2$, i.e., one pixel (including 3 dimensions for color images or 1 dimension for grey-scale images) is changed for every move. When running simulations (Line 10 of Algorithm~\ref{MCTS}), we let $\strategy_\playerOne(\feature)=\feature_r/\sum_{\feature\in\setoffeatures(\inputImage)}\feature_r$ for all keypoints $\feature\in\setoffeatures(\inputImage)$ and $\opt_{\strategy_{\playerTwo}}=\nature_{\strategy_{\playerTwo}}$. That is, player $\playerOne$ follows a stochastic strategy to choose a keypoint according to its response strength and player $\playerTwo$ is nature. In this section, we compare our method with existing approaches, show convergence of the MCTS algorithm on limited runs, evaluate safety-critical networks trained on traffic light images, and counter-claim a recent statement regarding adversarial examples in physical domains.

\smallskip\noindent{\bf Comparison with Existing Approaches}
We compare our approach to two state-of-the-art methods on two image classification networks, trained on the well known benchmark datasets MNIST and CIFAR10. 
The MNIST image dataset contains images of size $28\times 28$ and one channel and the network is trained with the source code given in~\cite{mnistNetwork}. The trained network is of medium size with 600,810 real-valued parameters, and achieves state-of-the-art accuracy, exceeding 99\%. It has 12 layers, within which there are 2 convolutional layers, as well as layers such as ReLU, dropout, fully-connected layers and a softmax layer. 
The CIFAR10 dataset contains small images, $32\times 32$, with three channels, and the network is trained with the source code from~\cite{cifar10}  for more than 12 hours. The trained network has 1,250,858 real-valued parameters and includes convolutional layers, ReLU layers, max-pooling layers, dropout layers, fully-connected layers, and a softmax layer.
For both networks, the images are preprocessed to make the value of each dimension lie within the bound $[0,1]$.
We randomly select 1000 images $\{\inputImage_i\}_{i\in \{1..1000\}}$ from both datasets for non-targeted safety testing. The numbers  in Table~\ref{tab:mcts} are average distances defined as 
$\frac{1}{1000}\cdot \sum_{i=1}^{1000}\distance{0}{\inputImage_i - \inputImage_i'}$,
where $\inputImage_i'$ is the adversarial image of $\inputImage_i$ returned by the algorithm. 
Table~\ref{tab:mcts} gives a comparison with the other two approaches (CW~\cite{CW-Attacks} and JSMA~\cite{JSMA}). The numbers for CW and JSMA are taken from~\cite{CW-Attacks} \footnote{For CW, the $L_0$ distance in \cite{CW-Attacks} counts the number of changed pixels, while for the others the $L_0$ distance counts the number of changed dimensions. Therefore, the number 5.8 in Table~\ref{tab:mcts} is not precise, and should be between 5.8 and 17.4, because colour images have three channels.}, where additional optimisations have been conducted over the original JSMA. According to \cite{JSMA}, the original JSMA has an average distance of 40 for MNIST. 
\begin{table}
\center
\begin{tabular}{|c|c|c|c|c|}
 \hline
$L_0$  & CW ($L_0$ algorithm) & Game (timeout = 1m) & JSMA-F & JSMA-Z \\ 
 \hline 
MNIST &  8.5 & 14.1 & 17 & 20 \\
CIFAR10 & 5.8 & 9 & 25 & 20 \\
 \hline
\end{tabular}
\caption{CW vs. Game (this paper) vs. JSMA}
\label{tab:mcts}
\end{table}
Our experiments are conducted by setting the termination conditions $tc_1 = 20 s$ and $tc_2 = 60 s$ for every image. Note that JSMA needs several minutes to handle an image, and CW is 10 times slower than JSMA~\cite{CW-Attacks}. From the table, we can see that, already in a limited computation time, our game-based approach can achieve a significant margin over optimised JSMA, which is based on saliency distributions, although it is not able to beat the optimisation-based approach CW. We also mention that, in \cite{DLV}, the un-optimised JSMA produces adversarial examples with smaller average $L_2$ distance than FGSM~\cite{FGSM} and  DLV on its single-path algorithm~\cite{DLV}. 
Appendix of \cite{WHK2017} provide illustrative examples exhibiting the manipulations that the three algorithms performed on the images.

\smallskip\noindent{\bf Convergence in Limited Runs}
To demonstrate convergence of our algorithm,
we plot the evolution of three variables related to the adversarial severity $\severity_\inputImage(\inputImage')$ against the number of iterations. 
The variable $best$ (in blue color) is the smallest severity found so far. 
The variable $current$ (in orange) is the severity returned in the current iteration. The variable $window$ (in green) is the average severity returned in the past 10 iterations. The blue and orange plots may overlap because we let the algorithm return the best example when it fails to find an adversarial example in some iteration.
The experiments are terminated with $1/\epsilon$-convergence of different $\epsilon$ value such as 0.1 or 0.05. 
The green plot getting closer to the other two provides empirical evidence of convergence.
\begin{figure}[h]

    \centering
    \includegraphics[width=\textwidth]{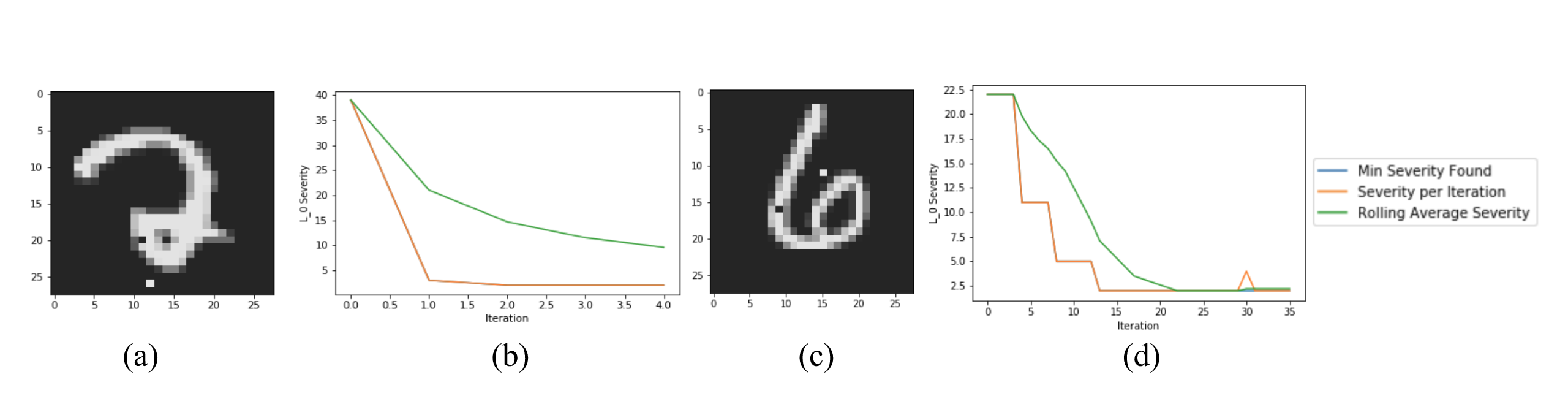}
    
    \caption{(a) Image of a two classified as a seven with 70\% confidence and (b) the demonstration of convergence. (c) Image of a six classified as a five with 50\% confidence and (d) the demonstration of convergence.}
    \label{fig:MNISTConverge}

\end{figure}
In Fig. \ref{fig:MNISTConverge} we show that two MNIST images converge over fewer than 50 iterations on manipulations of 2 pixels, and we have confirmed that they represent optimal strategies of the players. 
We also work with other state-of-the-art networks such as the VGG16 network~\cite{VGG16} from the ImageNet competition.
Examples of convergence are provided in Appendix of~\cite{WHK2017}.

\smallskip\noindent{\bf Evaluating Safety-Critical Networks}
We explore the possibility of applying our game-based approach  to support real-time decision making and testing,  
for which the algorithm needs to be highly efficient, requiring only seconds to execute a task.

We apply our method to a network used for classifying traffic light images collected from dashboard cameras. The Nexar traffic light challenge \cite{NexarData} made over eighteen thousand dashboard camera images publicly available. Each image is labeled either green, if the traffic light appearing in the image is green, or red, if the traffic light appearing in the image is red, or null if there is no traffic light appearing in the image. We test the winner of the challenge which scored an accuracy above 90\% \cite{NexarEntry}. 
Despite each input being 37632-dimensional (112x112x3), our algorithm reports that the manipulation of an average of 4.85 dimensions changes the network classification. 
Each image was processed by the algorithm in 0.303 seconds (which includes time to read and write images), i.e., 304 seconds are taken to test all 1000 images.
\begin{figure}[h]
    \centering
    \includegraphics[width=\textwidth]{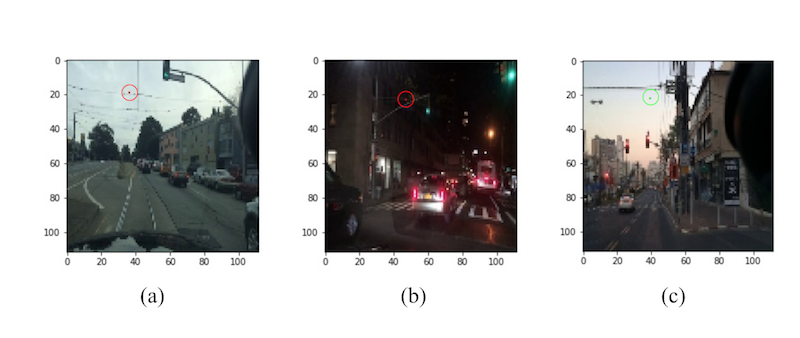}
    \caption{Adversarial examples generated on Nexar data demonstrate a lack of robustness. (a) Green light classified as red with confidence 56\% after one pixel change. (b) Green light classified as red with confidence 76\% after one pixel change. (c) Red light classified as green with 90\% confidence after one pixel change.}
    \label{fig:NexarFig1}
    
\end{figure}
We illustrate the results of our analysis of the network in Fig. \ref{fig:NexarFig1}. Though the images are easy for humans to classify, only one pixel change causes the network to make potentially disastrous decisions, particularly for the case of red light misclassified as green. 
To explore this particular situation in greater depth, we use a targeted safety MCTS procedure on the same 1000 images, aiming to manipulate images into green. We do not consider images which are already classified as green. Of the remaining 500 images, our algorithm is able to change all image classifications to green with worryingly low severities,  namely an average $L_0$ of 3.23.
On average, this targeted procedure returns an adversarial example in 0.21 second per image. 
Appendix~\ref{sec:convergence} provides some other examples.

\begin{figure}[h]
    \centering
    
    \includegraphics[width=0.32\textwidth]{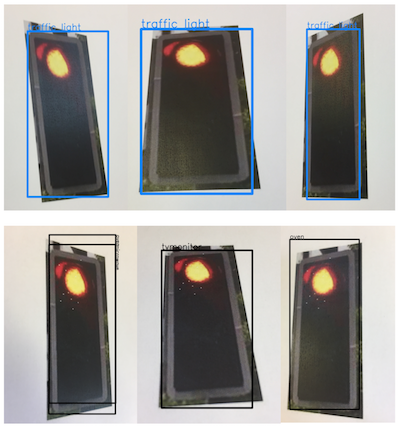}
    ~~~~
        \includegraphics[width=0.34\textwidth]{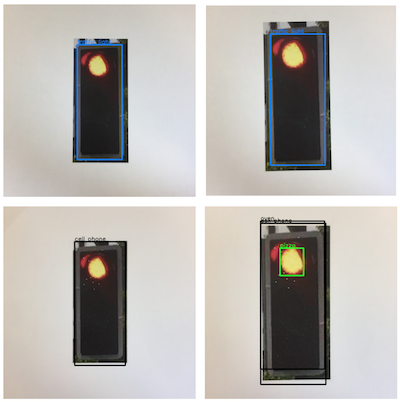}
    \caption{(Left) Adversarial examples in physical domain remain adversarial at multiple angles. Top images  classified correctly as traffic lights, bottom images  classified incorrectly as either ovens, TV screens, or microwaves. (Right) Adversarial examples in the physical domain remain adversarial at multiple scales. Top images correctly classified as traffic lights, bottom images classified incorrectly as ovens or microwaves (with the center light being misclassified as a pizza in the bottom right instance).}
    \label{fig:angleinvarience}
\end{figure}

\smallskip\noindent{\bf Counter-claim to Statements in \cite{NOneedtoworry}} 
A recent paper~\cite{NOneedtoworry} argued that, under specific circumstances, there is no need to worry about adversarial examples because they are not invariant to changes in scale or angle in the physical domain. Our SIFT-based approach, which is inherently scale and rotationally invariant, can easily counter-claim such statements. 
To demonstrate this, we conducted similar tests to \cite{NOneedtoworry}. We set up the YOLO network, took pictures of a few traffic lights in Oxford, United Kingdom, and generated adversarial examples on these images.
For the adversarial example shown in Fig.~\ref{fig:coverimage},  we print and photograph it at several different angles and scales to test whether it remains misclassified. The results are shown in Fig. \ref{fig:angleinvarience}.
In \cite{NOneedtoworry} it is suggested that realistic camera movements -- those which change the angle and distance of the viewer -- reduce the phenomenon of adversarial examples to a curiosity rather than a safety concern. Here, we show that our adversarial examples, which are predicated on scale and rotationally invariant methods, defeat these claims.

\section{Related Works}

We review works concerning the safety (and robustness) of deep neural networks. Instead of trying to be complete, we aim to only cover those directly related.  

\smallskip\noindent{\bf White-box heuristic approaches.}
In \cite{propertiesOfNeuralNetworks}, Szegedy et. al. find a targeted adversarial example by running the L-BFGS algorithm, which minimises the  $L_2$ distance between the images while maintaining the misclassification.
Fast Gradient Sign Method (FGSM) \cite{FGSM}, a refinement of L-BFGS, 
 takes as inputs the parameters $\theta$ of the model, the input $\inputImage$ to the model, and the target label $y$, and computes a linearized version of the cost function with respect to $\theta$ to obtain a manipulation direction. After the manipulation direction is fixed, a small constant value $\tau$ is taken as the magnitude of the manipulation. 
Carlini and Wagner \cite{CW-Attacks} adapt the optimisation problem proposed in \cite{propertiesOfNeuralNetworks} to obtain a set of optimisation problems for $L_0$, $L_2$, and $L_\infty$ attacks. They claim  better performance than FGSM and Jacobian-based Saliency Map Attack (JSMA) with their $L_2$ attack, in which for every pixel $x_i$ a new real-valued variable $w_i$ is introduced and then the optimisation is conducted by letting $x_i$ move along the gradient direction of $\tanh(w_i)$. 
Different from the optimisation approaches, the JSMA \cite{JSMA} uses 
a loss function to create a ``saliency map" of the image 
which indicates the importance of each 
pixel on the network's decision. 
A greedy algorithm is used to gradually modify the most important pixels.
In \cite{FGSM-universal}, an iterative application of an optimisation approach (such as \cite{propertiesOfNeuralNetworks})  is conducted on a set of images one by one to get an accumulated manipulation, which is expected to make a number of inputs misclassified.  
\cite{DBLP:journals/corr/abs-1708-06939} replaces the softmax layer in a deep network with a multiclass SVM and then finds adversarial examples by performing a gradient computation.

\smallskip\noindent{\bf White-box verification approaches.}
Compared with heuristic search approaches, the verification approaches aim to provide guarantees on the safety of DNNs. 
An early verification approach~\cite{PT2010}
 encodes the entire network as a set of constraints. The constraints can then be solved with a SAT solver. 
\cite{KBDJK2017} improves on \cite{PT2010} by handling the ReLU activation functions. The Simplex method for linear programming is extended  
to work with the piecewise linear ReLU functions that cannot be expressed using linear programming. 
The approach can scale up to networks with 300 ReLU nodes. 
In recent work~\cite{GKPB2017} the input vector space is partitioned using clustering and then the method of \cite{KBDJK2017} is used to check the individual partitions. 
DLV \cite{DLV} uses multi-path search and layer-by-layer refinement to exhaustively explore a finite region of  the vector spaces associated with the input layer or the hidden layers, and scales to work with state-of-the-art networks such as VGG16.

\smallskip\noindent{\bf Black-box algorithms.}
The methods in \cite{practical-black-box} evaluate a network  by generating a synthetic data set, training a surrogate model, and then applying white box detection techniques on the model. 
\cite{BlindSearchPaper} randomly searches the vector space around the input image for changes which will cause a misclassification. 
It shows that in some instances this method is 
efficient and able to indicate where salient areas of the image exist.

\section{Conclusion}

In this paper we present a novel feature-guided black-box algorithm for evaluating the resilience of deep neural networks against adversarial examples. Our algorithm employs the SIFT method for feature extraction, provides a theoretical safety guarantee under certain restrictions, and is very efficient, opening up the possibility of deployment in real-time decision support.
We develop a software package and demonstrate its applicability on a variety of state-of-the-art 
networks and benchmarks. 
While we have detected many instabilities in state-of-the-art networks, we have not yet found a network that is safe. 
Future works include comparison with the Bayesian inference method for identifying adversarial examples \cite{DG2017}.


\smallskip\noindent{\bf Acknowledgements}
Kwiatkowska is supported by EPSRC Mobile Autonomy Programme Grant (EP/M019918/1). Xiaowei gratefully acknowledges NVIDIA Corporation for its support with the donation of the Titan Xp GPU, and is partially supported by NSFC (no. 61772232)

\clearpage
\appendix
\section*{Appendix}

The appendix provides further details of our method and experimental results, as well as additional background information, and is organised as follows. In  Section~\ref{sec:convergence}, we provide additional empirical results of our algorithm, while in Section~\ref{sec:proofs} we give proofs of the theorems in Section~\ref{sec:mcts}. Section~\ref{sec:architectures} includes the architectures of the networks used in our experiments. 
SIFT-based feature detection techniques are explained in Section~\ref{sec:SIFT} and the intuition for using such techniques in safety testing is described in Section~\ref{sec:attackintuition}. Section~\ref{sec:nexar} provides information about the Nexar challenge and an overview of the MCTS algorithm is given in Section~\ref{sec:A-MCTS}. 
Finally, Section~\ref{sec:Lk} includes a discussion of the suitability of using $L_k$ distance measures in safety testing and verification and Section~\ref{sec:GeneralDNNs} of the feasibility of generalising our algorithm to DNNs for tasks other than image classification. 

\section{Empirical Results of the MCTS Algorithm}\label{sec:convergence}

In this section, we provide  empirical results showing the performance and convergence of our MCTS algorithm when working on networks trained on MNIST, CIFAR10, Nexar Challenge, and ImageNet datasets. 

\paragraph{MNIST} A few examples showing the convergence of our algorithm on MNIST network are given in Figure~\ref{fig:MNISTConvergeAppendix}. 

\begin{figure}
    \centering
    \includegraphics[width=\textwidth]{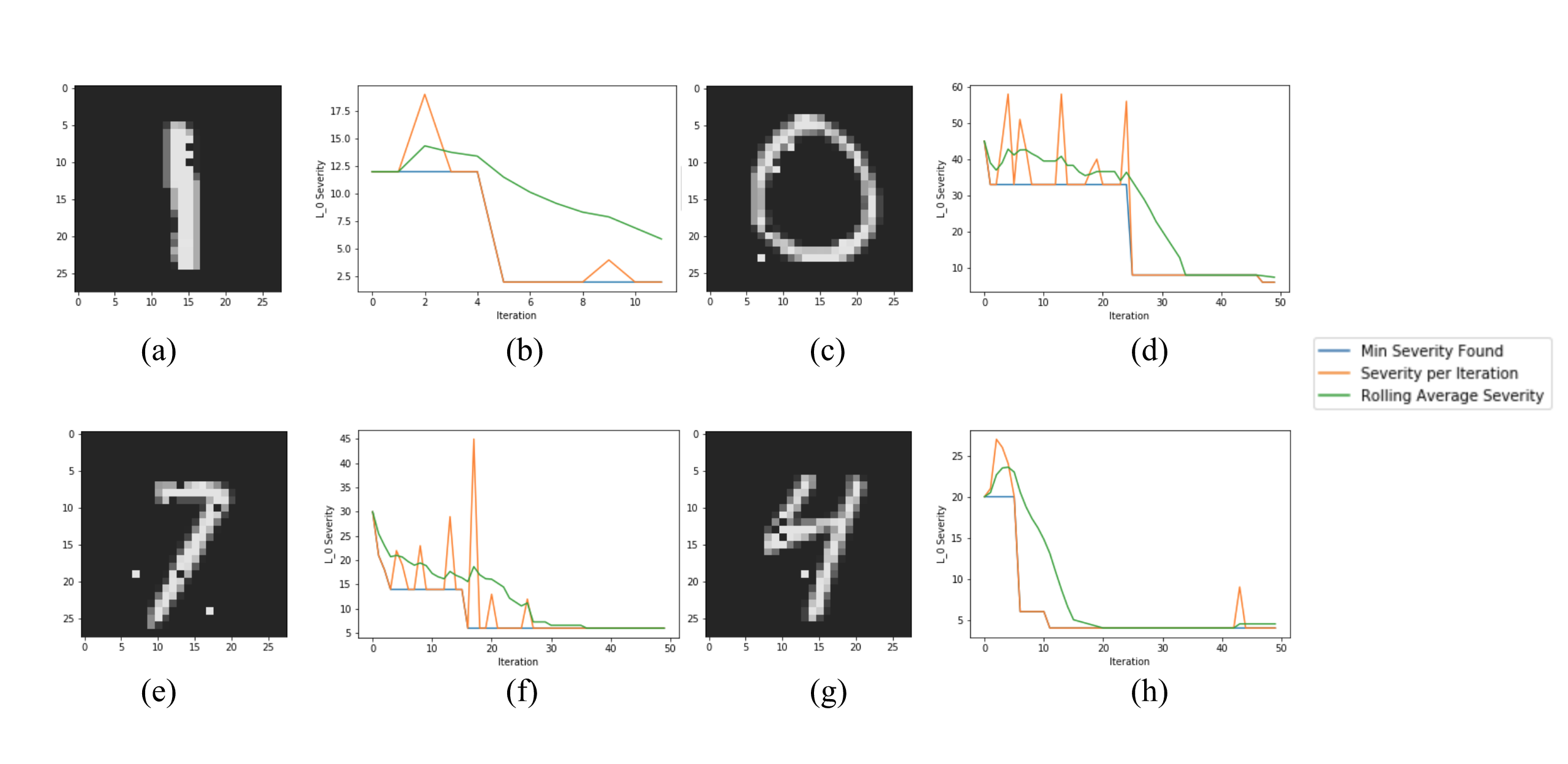}
    \caption{Further empirical evidence of MCTS strategy convergence. (a) Another optimal convergence example, one modified to an eight with confidence 55\%, and (b) plot of MCTS performance over 50 simulations (minimum severity, rolling average severity and severity per iteration). (c) An image of a zero classified as a five with confidence 48\% after six pixel manipulations and (d) MCTS performance on this image. (e) Image of a seven classified as an eight with 47\% confidence after six pixel manipulations and (f) MCTS performance on this image. (g) Image of a four predicted as an eight with 50\% confidence after four pixel manipulations and (h) MCTS performance on this image.}
    \label{fig:MNISTConvergeAppendix}
\end{figure}

\paragraph{CIFAR10} 

Figure~\ref{fig:cifar10} shows the results of a comparison of our algorithm with two state-of-the-art algorithms based on heuristic search, where the CW-$L_0$ attack algorithm is based on gradient descent and the JSMA algorithm on the Jacobian saliency map. 

\begin{figure}

\begin{center}

\hspace{3cm}Game\hspace{2cm}CW\hspace{2.5cm}JSMA

\includegraphics[width=1.45cm,height=1.45cm]{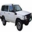}
\includegraphics[width=1.45cm,height=1.45cm]{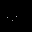}
\includegraphics[width=1.45cm,height=1.45cm]{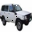}
\includegraphics[width=1.45cm,height=1.45cm]{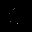}
\includegraphics[width=1.45cm,height=1.45cm]{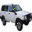}
\includegraphics[width=1.45cm,height=1.45cm]{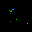}
\includegraphics[width=1.45cm,height=1.45cm]{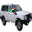}

automobile ~~ $L_0 = 12$~~~~airplane~~~~~~$L_0 = 15$~~~~~~airplane~~~~~~$L_0 = 16$~~~~~~airplane

\includegraphics[width=1.45cm,height=1.45cm]{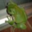}
\includegraphics[width=1.45cm,height=1.45cm]{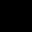}
\includegraphics[width=1.45cm,height=1.45cm]{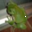}
\includegraphics[width=1.45cm,height=1.45cm]{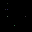}
\includegraphics[width=1.45cm,height=1.45cm]{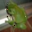}
\includegraphics[width=1.45cm,height=1.45cm]{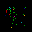}
\includegraphics[width=1.45cm,height=1.45cm]{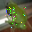}

frog~~~~~~~~~~~~~$L_0 = 3$~~~~~~~~~airplane~~~~~~$L_0 = 18$~~~~~~airplane~~~~~~$L_0 = 40$~~~~~~airplane

\includegraphics[width=1.45cm,height=1.45cm]{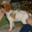}
\includegraphics[width=1.45cm,height=1.45cm]{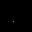}
\includegraphics[width=1.45cm,height=1.45cm]{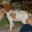}
\includegraphics[width=1.45cm,height=1.45cm]{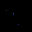}
\includegraphics[width=1.45cm,height=1.45cm]{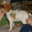}
\includegraphics[width=1.45cm,height=1.45cm]{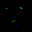}
\includegraphics[width=1.45cm,height=1.45cm]{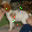}

dog~~~~~~~~~~~~~~~$L_0 = 3$~~~~~~~~~~bird~~~~~~~~~~~$L_0 = 12$~~~~~~~~~bird~~~~~~~~~$L_0 = 10$~~~~~~~~~bird



\includegraphics[width=1.45cm,height=1.45cm]{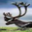}
\includegraphics[width=1.45cm,height=1.45cm]{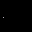}
\includegraphics[width=1.45cm,height=1.45cm]{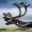}
\includegraphics[width=1.45cm,height=1.45cm]{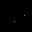}
\includegraphics[width=1.45cm,height=1.45cm]{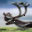}
\includegraphics[width=1.45cm,height=1.45cm]{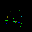}
\includegraphics[width=1.45cm,height=1.45cm]{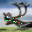}

deer~~~~~~~~~~~~~~$L_0 = 6$~~~~~~~~airplane~~~~~~$L_0 = 6$~~~~~~~airplane~~~~~~~$L_0 = 24$~~~~~~airplane

\caption{Adversarial examples by Game (this paper) vs. CW vs. JSMA for CIFAR-10 networks.}
\label{fig:cifar10}

\end{center}
\end{figure}

\paragraph{Nexar Challenge}

Fig. \ref{fig:NexarFig2} gives some examples of applying our algorithm to \emph{targeted} safety. We also show in Fig. \ref{fig:NexarConverge} that, for many inputs, MCTS is able to find an optimal strategy (a single-pixel misclassification) in only eight simulations (about 0.3 seconds).

\paragraph{ImageNet}

We work with state-of-the-art networks for the imageNet challenge, including the VGG16 network~\cite{VGG16}.
Plots in Figure~\ref{fig:ImageNetConverge} and Figure~\ref{fig:ImageNetConverge2} show clear convergence of our algorithm.

\section{Proofs of Theorems}\label{sec:proofs}

In this section, we provide proofs to the two theorems in Section~\ref{sec:mcts}.

\setcounter{theorem}{2}

\begin{theorem}\label{thm:misclassification}
If all $\tau$-grid images are misclassification aggregators with respect to $\tau/2$, and $\severity(M(\inputImage,k,d), \max_{\strategy_{\playerTwo}}) > d$, then $adv_{N,k,d}(\inputImage,c)=\emptyset$. 
\end{theorem}
\begin{proof} 
First, we need to show that $\eta(\inputImage,k,d) \subseteq \bigcup_{\inputImage_1\in \tauimage(\inputImage,k,d)}\eta(\inputImage_1,1,\tau/2)$. This can be obtained by the definitions of $\eta(\inputImage_1,1,\tau/2)$ and $\eta(\inputImage,k,d)$. 

Now assume that $adv_{N,k,d}(\inputImage,c)\neq \emptyset$. Then there must exist an image $\inputImage'$ such that $\inputImage' \in adv_{N,k,d}(\inputImage,c)$. Because all $\tau$-grid images are misclassification aggregators with respect to $\tau/2$, there must exist a $\tau$-grid image $\inputImage''$ such that $\inputImage'' \in adv_{N,k,d}(\inputImage,c)$. By Theorem~\ref{thm:taugrid}, we have $ \severity_\inputImage(\inputImage'') \geq \severity(M(\inputImage,p,d,\tau), \max_{\strategy_{\playerTwo}})$. By the hypothesis that $\severity(M(\inputImage,k,d), \max_{\strategy_{\playerTwo}}) > d$, we have  $\severity_\inputImage(\inputImage'') > d$, which is impossible because $\inputImage'' \in adv_{N,k,d}(\inputImage,c) \subset \eta(\inputImage,k,d)$. \hfill  $\boxempty$
\end{proof}

\begin{theorem}
Let $N$ be a Lipschitz network with respect to $L_1$ and a constant $\lipschitzConstant $. Then, when $\tau \leq  \frac{2 \classChangeDistance}{ \lipschitzConstant}$ and $\severity(M(\inputImage,k,d), \max_{\strategy_{\playerTwo}}) > d$, we have that $adv_{N,k,d}(\inputImage,c)=\emptyset$. 
\end{theorem}
\begin{proof}
We need to show that $\tau \leq  \frac{2 \classChangeDistance}{ \lipschitzConstant}$ implies that all $\tau$-grid images are misclassification aggregators with respect to $\tau/2$. First of all, by the definition of Lipschitz network, we have $|N(\inputImage_2, N(\inputImage_2))-N(\inputImage_1, N(\inputImage_2))| < \lipschitzConstant \cdot \distance{1}{\inputImage_2 - \inputImage_1} $. Then, by the definition of $\classChangeDistance$, we have $\distance{1}{\inputImage_2 - \inputImage_1} > \classChangeDistance/ h$ when $N(\inputImage_2)\neq N(\inputImage_1)$. Second, we notice that, the statement that all $\tau$-grid images are misclassification aggregators with respect to $\tau/2$ is equivalent to saying that, for any $\tau$-grid image $\inputImage_1$ such that $N(\inputImage_1)=N(\inputImage)$, we have that, for any $\inputImage_2$, $N(\inputImage_2)\neq N(\inputImage_1)$ implies that $\distance{1}{\inputImage_2-\inputImage_1} >  \tau/2$. Finally, we notice that $\distance{1}{\inputImage_2-\inputImage_1} >  \tau/2$ holds when $\distance{1}{\inputImage_2 - \inputImage_1} > \classChangeDistance/ h$ and $\tau \leq  \frac{2 \classChangeDistance}{ \lipschitzConstant}$.  \hfill  $\boxempty$
\end{proof}

\begin{figure}
    \centering
    \includegraphics[width=\textwidth]{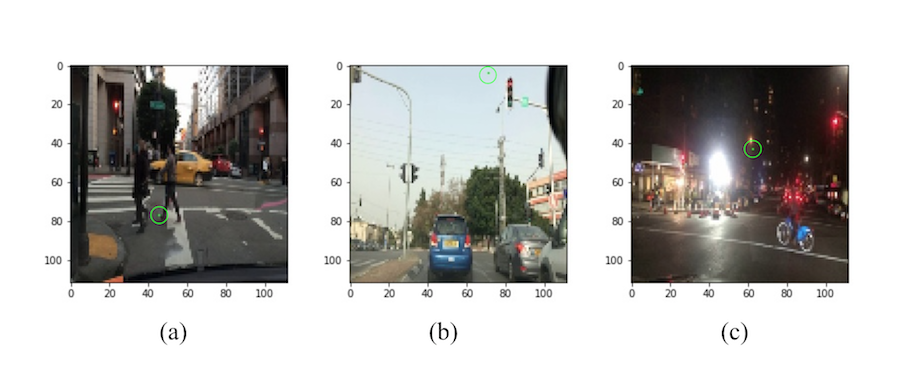}
    
    \caption{Targeted adversarial examples on Nexar  illustrate safety concerns. (a) Red light classified as green with 68\% confidence after one pixel change. (b) Red light classified as green with 95\% confidence after one pixel change. (c) Red light classified as green with confidence 78\% after one pixel change.}
    \label{fig:NexarFig2}
 \end{figure}

 \begin{figure}

    \includegraphics[width=\textwidth]{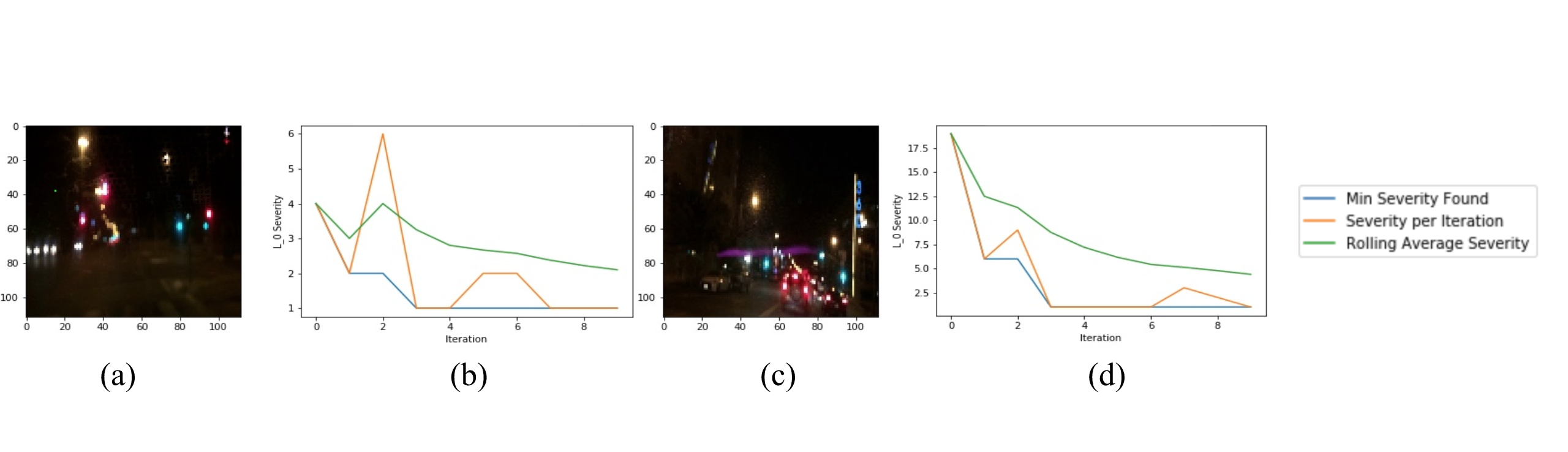}
    
    \caption{Convergence to an optimal strategy on Nexar traffic light images. (a) An image of a red light manipulated into a green light after a single pixel change and the plot of convergence over eight simulations (b). (c) An image of a green light manipulated to a red light after a single pixel manipulation and (d) its convergence plot over eight simulations.}
    \label{fig:NexarConverge}
    
\end{figure}

\begin{figure}
    \centering
    
    \includegraphics[width=\textwidth]{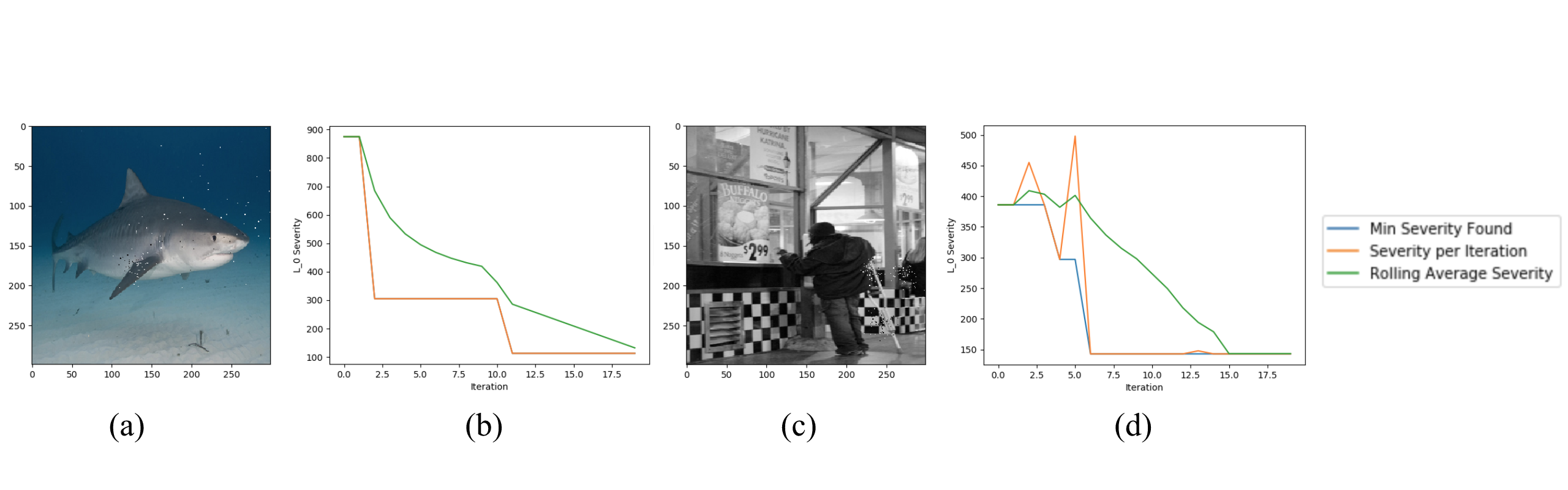}
    
    \caption{Adversarial examples generated on the VGG16 architecture trained on ImageNet data. (a) Image of a great white shark classified as a galeocerdo cuvieri with confidence 42\% after 113 manipulations and (b) the demonstration of convergence over 20 simulations. (b) An image of a crutch classified as bakery after 143 manipulations and (d) the demonstration of convergence 
    over 20 simulations.}
    \label{fig:ImageNetConverge}
    
\end{figure}

\begin{figure}
    \centering
    \includegraphics[width=\textwidth]{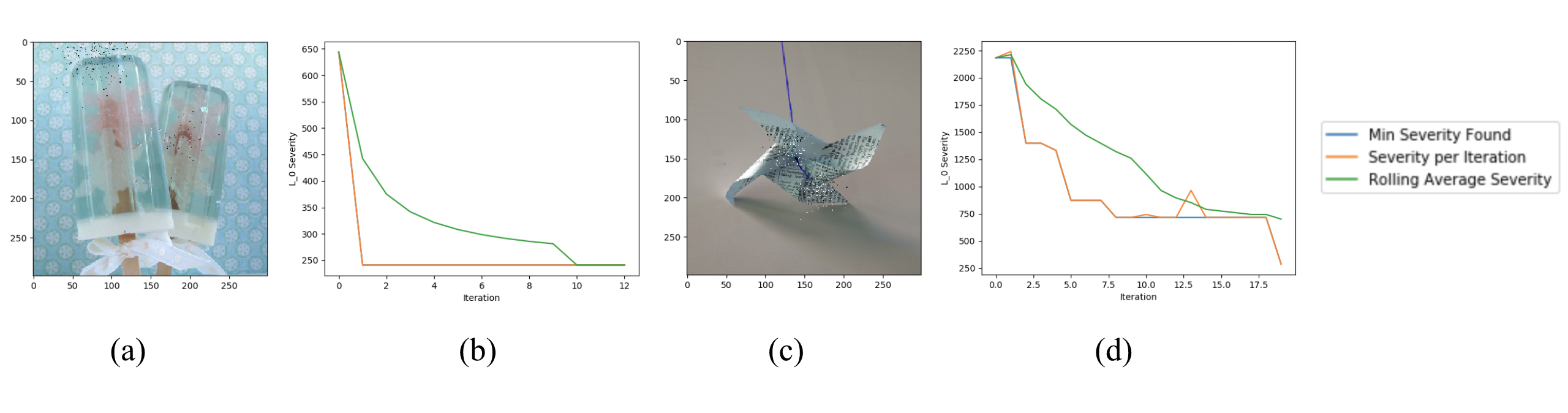}
    \caption{Further empirical evidence of MCTS strategy convergence on state-of-the-art VGG16 network. (a) Image of an ice lolipop predicted as a nipple with 30\% confidence after 241 pixel manipulations and (b) MCTS performance on this image. (c) A pinwheel predicted as a radio telescope with confidence 21\% after 287 pixel manipulations and (d) MCTS performance on this image.}
    \label{fig:ImageNetConverge2}
\end{figure}

\section{Network Architectures in the Experiments}\label{sec:architectures}

This section provides the details of the architectures and training parameters of the networks we work with in our experiments. This includes MNIST networks in Table~\ref{tab:MNISTTable}, CIFAR10 network in Table~\ref{tab:CIFARTable}, VGG16 network for ImageNet dataset in Table~\ref{tab:VGG Arch Table}, and the champion network for Nexar challenge in Table~\ref{tab:NexarTraining}.  

\begin{table}
\begin{center}

 \begin{tabular}{||c c | c c||}
 \hline
 Layer Type & Layer Size & Parameter & SGD \\  
 \hline\hline
 Conv + ReLU & 3x3x32 & Learning Rate & 0.1 \\  
 \hline
 Conv + ReLU  & 3x3x32  & Momentum & 0.9 \\ 
 \hline
 Max Pooling & 2x2 & Delay Rate & - \\ 
 \hline
 Conv + ReLU  & 3x3x64 & Dropout & 0.5 \\  
 \hline
 Conv + ReLU  & 3x3x64  & Batch Size & 128 \\ 
 \hline
 Max Pooling  & 2x2 & Epochs & 50 \\ 
 \hline
 Dense + ReLU & 200 &  &  \\ 
 \hline
 Dense + ReLU & 200 &  &  \\ 
 \hline
 Softmax + ReLU & 10 &  &  \\ 
 \hline
\end{tabular}
\caption{MNIST LeNet Architecture and training parameters used in \cite{CW-Attacks} and \cite{JSMA}.} \label{tab:MNISTTable}
\end{center}
\end{table}

\begin{table}
\begin{center}

 \begin{tabular}{||c c | c c||}
 \hline
 Layer Type & Layer Size (or value) & Parameter & SGD \\  
 \hline\hline
 Conv + ReLU & 3x3x32 & Learning Rate & 0.1 \\  
 \hline
 Conv + ReLU  & 3x3x32  & Momentum & 0.9 \\ 
 \hline
 Max Pooling & 2x2 & Delay Rate & - \\ 
 \hline
 Dropout & 0.25 &  &  \\ 
 \hline
 Conv + ReLU  & 3x3x64 & Dropout & 0.5 \\  
 \hline
 Conv + ReLU  & 3x3x64  & Batch Size & 128 \\ 
 \hline
 Max Pooling  & 2x2 & Epochs & 50 \\ 
 \hline
 Dropout & 0.25 &  &  \\ 
 \hline
 Dense + ReLU & 512 &  &  \\ 
 \hline
 Dropout & 0.5 &  &  \\ 
 \hline
 Softmax + ReLU & 10 &  &  \\ 
 \hline
\end{tabular}
\caption{Architecture and training parameters used in for CIFAR10 dataset.} \label{tab:CIFARTable}
\end{center}
\end{table}

\begin{table}
\begin{center}
 \begin{tabular}{||c c | c c||}
 \hline \hline
 Layer Type & Layer Size & Parameter & SGD \\  
 \hline\hline
 Conv + ReLU & 3x3x64 & Learning Rate & 0.1 \\  
 \hline
 Conv + ReLU & 3x3x64 & Decay & 1e-9\\ 
 \hline
 Max Pooling & 2x2 & Momentum & 0.9  \\ 
 \hline
 Conv + ReLU & 3x3x128 & Nesterov & 1 \\  
 \hline
 Conv + ReLU & 3x3x128 & Loss & Categorical Crossentropy \\ 
 \hline
 Max Pooling & 2x2 &  &  \\ 
 \hline
 Conv + ReLU & 3x3x256 &  &  \\  
 \hline
 Conv + ReLU & 3x3x256 &  & \\ 
 \hline
 Conv + ReLU & 3x3x256 &  &  \\ 
 \hline
 Max Pooling & 2x2 &  &  \\ 
 \hline
 Conv + ReLU & 3x3x512 & &  \\  
 \hline
 Conv + ReLU & 3x3x512 &  &  \\ 
 \hline
 Conv + ReLU & 3x3x512 &  &  \\ 
 \hline
 Max Pooling & 2x2 &  &  \\ 
 \hline
 Conv + ReLU & 3x3x512 &  & \\  
 \hline
 Conv + ReLU & 3x3x512 &  & \\ 
 \hline
 Conv + ReLU & 3x3x512 & &  \\ 
 \hline
 Max Pooling & 2x2 &  &  \\ 
 \hline
 Dense  & 4096 &  &  \\ 
 \hline
 Dropout  & 0.5 &  &  \\ 
 \hline
 Dense  & 4096 &  &  \\ 
 \hline
 Dropout  & 0.5 &  &  \\ 
 \hline
 Dense + Softmax  & 1000 &  &  \\ 
 \hline
\end{tabular}
\caption{Architecture and training parameters of VGG16 for imageNet dataset. \label{tab:VGG Arch Table}}
\end{center}
\end{table}

\begin{table}[H]
\begin{center}
 \begin{tabular}{||c c | c c||}
 \hline \hline
 Layer Type & Layer Size & Parameter & Adam \\  
 \hline\hline
 Conv + ReLU & 3x3x16 & Learning Rate &  3e-4\\  
 \hline
 Max Pooling & 3x3 & Beta 1 &  0.9\\  
 \hline
 Conv + ReLU & 3x3x32 & Beta 2 & 0.999\\  
 \hline
 Max Pooling & 3x3 & Fuzz Factor & 1e-08 \\  
 \hline
 Conv + ReLU & 3x3x64 & Decay & 0.0 \\  
 \hline
 Max Pooling & 2x2 &  &  \\  
 \hline
 Dense & 128 &  &  \\  
 \hline
 Softmax & 3 &  &  \\  
 \hline
\end{tabular}
\caption{Architecture and training parameters for a winning entry in the Nexar Traffic Light challenge \cite{NexarData}.}
\label{tab:NexarTraining}
\end{center}
\end{table}

\section{Feature Detection Techniques}\label{sec:SIFT}



In this section, we give a brief review of a state-of-the-art computer vision algorithm which will be used in our black-box approach. 
%
The Scale Invariant Feature Transform (SIFT) algorithm \cite{SIFT}, 
a reliable technique for exhuming features from an image, makes object localization and tracking possible without the use of neural networks. Generally, the SIFT algorithm proceeds in a few steps: scale-space extrema detection (detecting relatively darker or lighter areas in the image), keypoint localization (determining the exact position of these areas), and keypoint descriptor assignment (understanding the context of the image w.r.t its local area). Below, we give a summary of the SIFT algorithm, and focus on the output and features which will be 
used in our algorithms. 

\paragraph{Scale-Space Extrema Detection}

In \cite{structureOfImages}, it is shown that the only appropriate way to parameterize the resolution of an image without the generation of spurious details (i.e. details which are not inherent in the image, but generated by the method of parameterization) is given by the two-dimensional Gaussian kernel. Lowe \cite{SIFT} uses this to detect extrema in a given image $\inputImage$ by observing the local pixel-value extrema at different scales. Formally, the $k^{th}$ scale of an image $\inputImage$ is calculated\footnote{The SIFT algorithm uses difference of Gaussians at different scales (i.e $S(x,y,k\sigma) -  S(x,y,\sigma)$); for more information see previous work by Lowe \cite{Lowe:1999}.} as follows: 
\[ S(x,y, k\sigma) = G(x,y,k\sigma)*\inputImage(x,y)   \tag{5}\label{eq:5}\]
\noindent
where $x, y$ represent the Euclidean coordinates of a pixel, * is the convolution operator, and $G(x,y,\sigma)$ is the two-dimensional Gaussian kernel given by:

\[ G(x,y,\sigma) = \dfrac{1}{2\pi\sigma^2} exp(-(x^2+y^2)/2\sigma^2) \tag{6}\label{eq:6}.\]

Essentially this parametrization allows us to change $\sigma$ -- the variance of the distribution -- in order to achieve different scales. In practice, it has been noted that, with this parameterization, we are able to filter out some noise of the image, and are able to detect extrema of varying sizes. To get both large and small-sized extrema, we observe the image at a range of scales. Each of the scale ranges is then called an octave, and, after an octave has been calculated, we down-sample the image by a factor of two and observe another octave. In the left images of Fig.~\ref{fig:diffofgaussians}, we show the result of applying the Gaussian kernel to a traffic light. It is clear that this blurring removes some of the unnecessary details within the image and leaves some of the larger features to be examined. After the calculation of a scale space range for each octave, Lowe detects extrema by observing the neighbors of a three by three kernel. If a pixel value is larger or smaller than its neighbors in successive scales, then it is marked as a "keypoint" (as shown in the right portion of Fig.~\ref{fig:diffofgaussians}). 

\begin{figure}[h]
    \centering
    \includegraphics[width=0.8\textwidth]{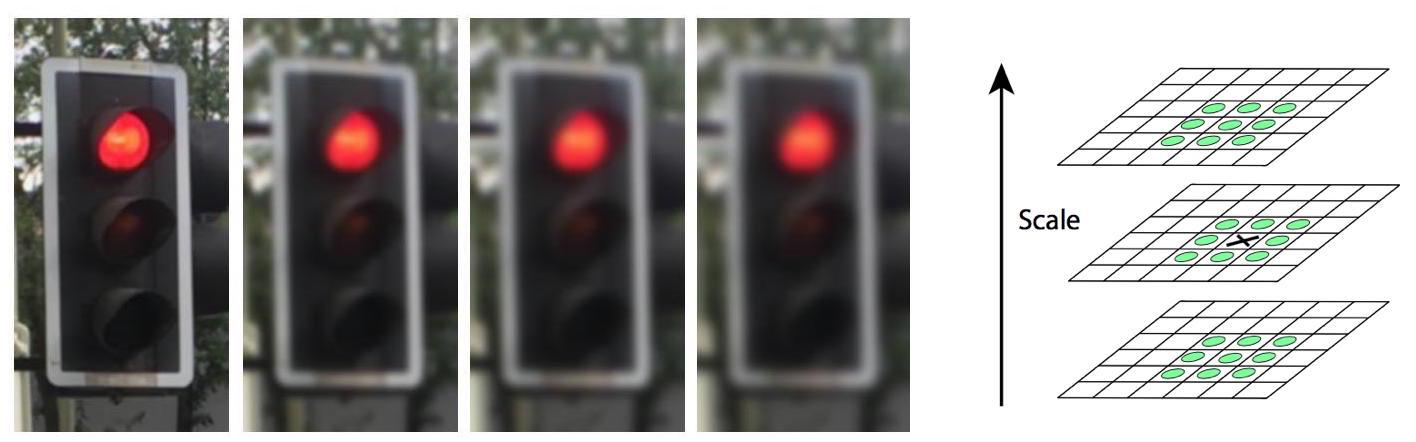}
    \caption{Demonstration of Gaussian blur effect to generate scale. Far right: figure from \cite{SIFT} shows how extrema are detected from these scaled images. Black 'x' marks an extremum if it is larger or smaller than all of the pixels around it.}
    \label{fig:diffofgaussians}
\end{figure}

Importantly, detection of extrema by this method has been shown to be invariant to changes in translation, scaling and rotation, and is minimally affected by noise and small distortions \cite{Lowe:1999}. For our algorithm this means that we should be able to detect and manipulate salient features of images even when the image is of low quality.

\paragraph{Keypoint Description}

After scale-space extrema have been detected, they are located in the original image. Initially, in \cite{SIFT}, this localization was done by translating the pixel location from the scale and octave onto the original image; however, this was later improved by using the Taylor expansion of the scale space function 
shifted so that the origin is at the sampled point. Regardless of which of these is used, the first step of keypoint description is to assign the exact $x$ and $y$ coordinates of the extrema in the image. Once the extrema have been described with a location we refer to them as keypoints in a set $\Lambda$ where each keypoint $\lambda \in \Lambda$ has an $x$ and $y$ coordinate, $\lambda_x$ and $\lambda_y$, respectively. 

After localizing these keypoints, we describe their size and orientation. Size is calculated by the magnitude of the gradient vector corresponding to the keypoint which was located; we will denote the size as  $\lambda_{s}$. After size has been calculated, we sample pixel values from different areas around the keypoint to generate descriptors. The implementation of SIFT in \cite{opencv} (used by our algorithms) gives 128 different local descriptors for each keypoint, which include size, response strength, orientation angle, and local gradient magnitudes. The response strength of keypoints, $\lambda_{r}$, which is derived from the persistence of a keypoint across multiple octaves and scales, is important for our formulation of a salience distribution. 

\begin{figure}[h]
    \centering
    \includegraphics[width=0.5\textwidth]{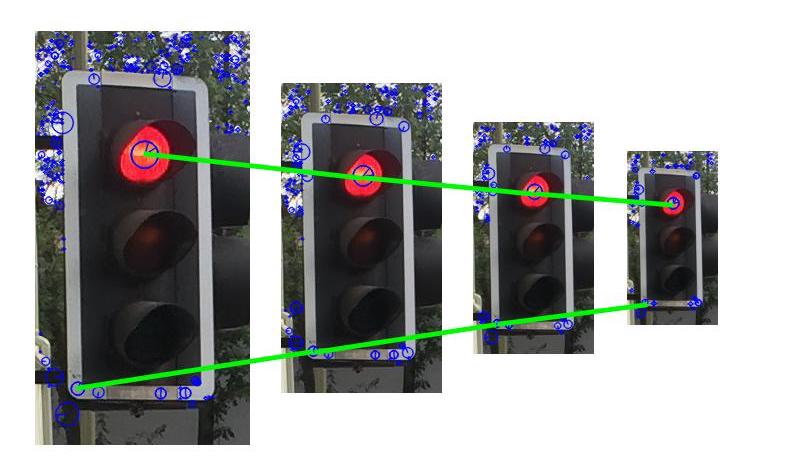}
    \caption{Locating, describing and matching keypoints across the image. Blue circles are keypoints, while green lines represent a few selected keypoints which are persistent throughout each image size.}
    \label{fig:keypoints}
\end{figure}

\section{Intuition for Using Feature Detection for Safety Testing}\label{sec:attackintuition}

It is reasonable to assume that, if any visual system mistakes the classification of an object, then both the spatial and compositional elements of the image must have played an important role. In artificial visual systems, this mapping between an image's basic elements and its classification is systematically learned; however, the ability to know if a system has truly understood the relation between an image's composition or structure and its classification is difficult, and the advent of adversarial examples suggests that artificial visual systems are very sensitive to perturbations of these elements. 

To test this hypothesis -- that an artificial visual system is very sensitive to changes in structural or compositional elements -- we need to be able to pinpoint and manipulate the most important aspects of such elements.

\begin{figure}[h]
    \centering
    \includegraphics[width=\textwidth]{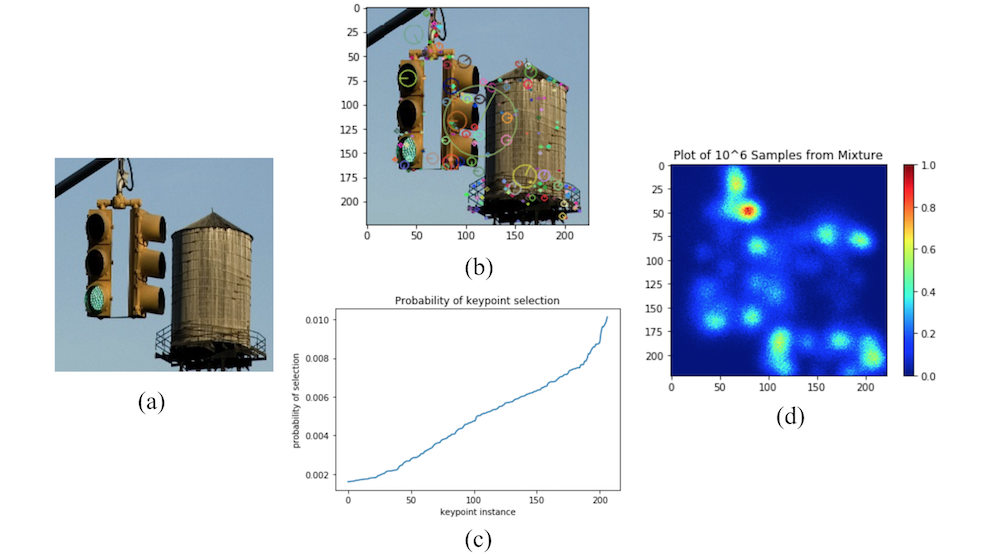}
    \caption{Illustration of the transformation of an image into a saliency distribution. (a) The original image, provided by ImageNet. (b) The image marked with relevant keypoints. (c) The weighting of keypoints by their response strength, where the weights are sorted in order to show the range and distribution of probabilities. (d) An illustration of the probability distribution after observing one million samples.}
    \label{fig:imtosal}
\end{figure}

Over the years, many artificial visual systems have been proposed with varying degrees of success. Modern convolutional neural networks (CNNs) are hypothesized to model the primary visual cortex of humans and primates. The success of modern networks in addition to observations of their hidden layers has been cited as support for this hypothesis. 
Prior to the success of CNNs, feature detection was completed by using methods such as the Scale Invariant Feature Transform (SIFT) algorithm.

The methods by which SIFT computes features is not only deterministic, but well understood as a reliable way to identify basic structural and compositional elements of an image, albeit it may be difficult or expensive to transform this identification to an understanding of an image's classification. CNNs, on the other hand, are able to successfully understand the mapping between an image and it's classification -- but how much of that relationship is dependent on basic structural and compositional elements is unknown. Adversarial examples show that minor changes to such elements can be catastrophic.  

Given the opposing strengths of SIFT and CNNs, it seemed natural to explore the relationship between deterministic feature detection of SIFT and automatic, stochastic feature detection of CNNs. 

\begin{figure}[h]
    \centering
    \includegraphics[width=\textwidth]{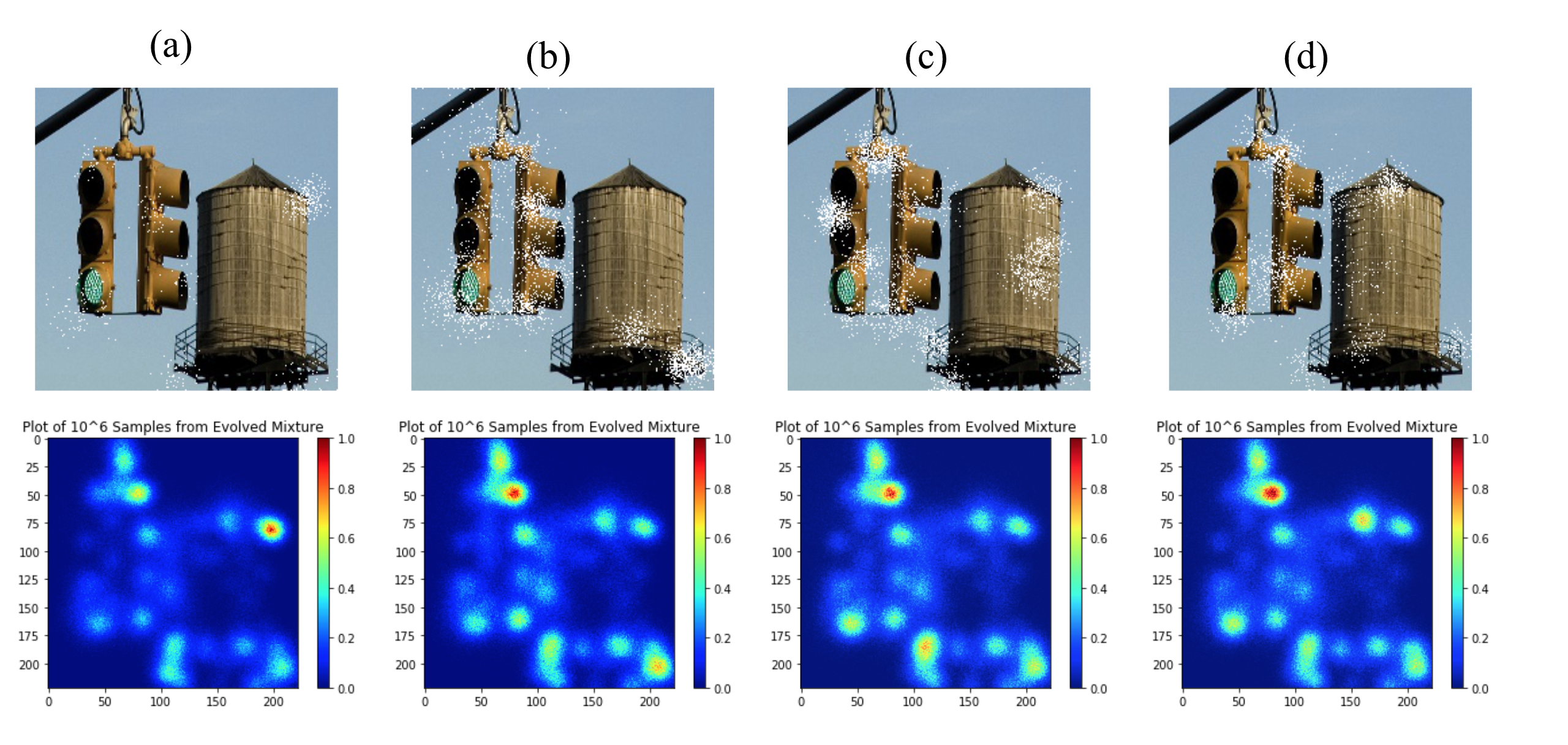}
    \caption{Illustration of the evolution of a saliency distribution with different target classes. Each of the above images was the result of a single, target Monte Carlo simulation with no refinement procedure for illustrative purposes. (a) Modified to Crane with 57.6\% confidence after 572 pixel changes. (b) Modified to Fountain with 48.2\% confidence after 2172 pixel changes. (c) Modified to Castle with 20.4\% confidence after 2553 pixel changes. (d) Modified to Bell with 27.3\% confidence after 1895 pixel changes.}
    \label{fig:evolvedsal}
\end{figure}

In Fig.~\ref{fig:imtosal} we illustrate an image's transformation into a saliency distribution. 
While it is intuitive that these two intimately related algorithms have some common ground, it would not be prudent to assume that the feature detection performed by SIFT and by all CNNs is identical. As such, we introduced Monte Carlo saliency updating and MCTS algorithms to actively re-weight the distribution components based on what we are able to glean from querying the CNN model. Constantly updating the saliency distribution -- see Fig. \ref{fig:imtosal}(d) -- based on the impact of the feature's manipulation wrt the CNN model allows us to dynamically bridge the gap between deterministic and stochastic feature detection (SIFT and CNNs respectively). Further, we can see that, when we allow for the salience distribution to evolve towards a particular target classification, the network weights different keypoints of the image.

Though in Fig.~\ref{fig:evolvedsal} the differences in saliency distributions are subtle, they are also crucial. Flexible saliency distributions are needed to find and manipulate the most crucial elements of an image. For example, a set of ideally placed white pixels can cause the network to believe that a structural element is present where it clearly is not. Similarly, a set of ideally placed pixels can disguise key elements which may lead to a misclassification. In either event, one thing seems certain: the key structural and compositional elements are the heart of any visual system. SIFT provides reliable means to pinpoint these elements, which we exploit in our approach.

Compared to CNNs, SIFT is fast but does not generalise; we do not rely on the latter aspect in our approach.

\section{Information about The Nexar Challenge Network}\label{sec:nexar}

The retraining of the Nexar network  \cite{NexarEntry} -- whose architecture and hyper-parameters are detailed in Table \ref{tab:NexarTraining}
-- can be achieved by executing the script found in the Examples directory of the SafeCV package. Below we give the training details of the network tested in this paper.

\begin{figure}
    \centering
    \begin{minipage}{0.45\textwidth}
        \centering
        \includegraphics[width=0.9\textwidth]{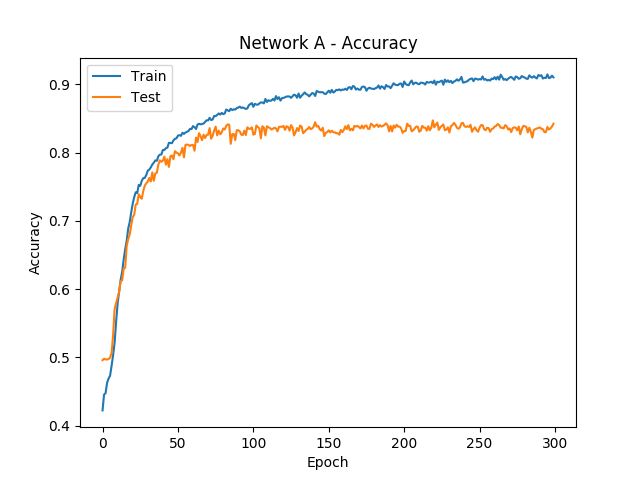} 
        \caption{Accuracy per epoch during the training of the winning entry of the Nexar Challenge \cite{NexarEntry}; training and test accuracy peek at around 90\% and 87\% respectively.}
        \label{NexarAccFigure}
    \end{minipage}\hfill
    \begin{minipage}{0.45\textwidth}
        \centering
        \includegraphics[width=0.9\textwidth]{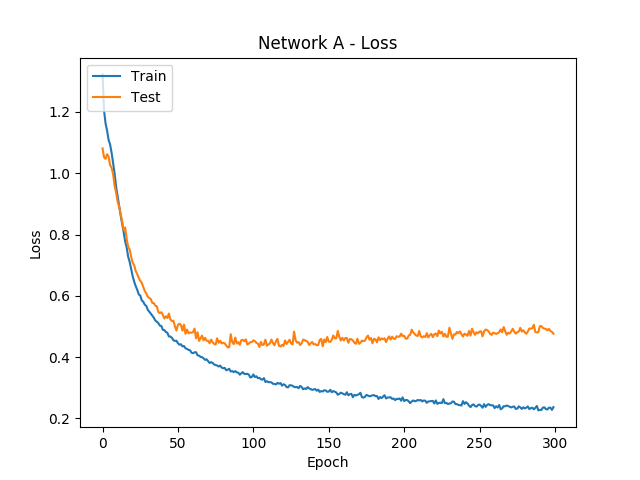} 
        \caption{Loss per epoch during the training of the winning entry of the Nexar Challenge \cite{NexarEntry}.}
        \label{NexarLossFigure}
    \end{minipage}
\end{figure}


\section{Overview of the Monte Carlo Tree Search Algorithm}\label{sec:A-MCTS}

In this section we present a more detailed explanation of the Monte Carlo Tree Search (MCTS) and offer parallels with the game representation of crafting adversarial examples. More detailed information on this method and its origination can be found in \cite{KS2006} and \cite{CWU2008}. Throughout this section we will explain the tree search mechanism first, and then we will reinforce this by recalling the two-player game representation. 

The tree representation is such that each $\alpha'$ that can be reached by one manipulation of $\alpha$ is on the first level, two manipulations of $\alpha$ on the second level, and so on. Further, each node has children that represent the  manipulation of a particular pixel. In the game, a path from the root to a leaf would be modeled with the sequence $s_1u_1s_2u_2...$ of game states such that, for all $k\geq 1$, we have $u_k$ as a move made by player \playerOne\ (selecting which keypoint to manipulated) and $s_{k+1}$ as a move made by player \playerTwo\ (selecting a specific pixel and how to manipulate it). Here, we restrict ourselves to visualizing the moves made by player \playerOne.

During a Monte Carlo tree search, a partial tree of previously explored states is maintained and the policy for exploring the children of each state is continuously updated. Within the framework of our game, this can be thought of as maintaining a series of previous game plays and updating the strategies of each player, respectively. 

Recall that we utilize SIFT keypoints as a high level representation of the input which allows an \textit{a priori} saliency distribution. The components distributions and their weights will serve as initial exploration policies for players \playerOne\ and \playerTwo, respectively, and by extension for exploring the tree. The exploration algorithm is classically split into four steps: \textit{selection}, \textit{expansion}, \textit{simulation}, and \textit{backpropagation}.

\begin{figure}[h]
    \centering
   \includegraphics[width=\textwidth]{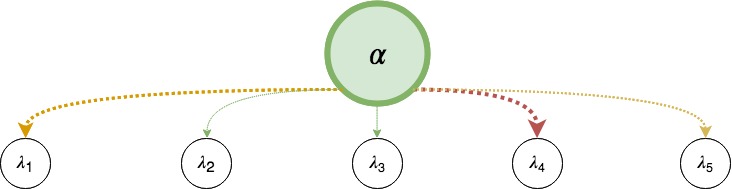}
    \caption{The initial tree with $\alpha$ (the original image) as the root of the tree and each child representing the options of player \playerOne\  (the keypoints of $\alpha$). We use a green color to represent `leaf' nodes, blue to represent internal nodes, dotted lines to represent unexplored paths, solid lines to represent explored paths, and colors to represent probability of selection.}
    \label{fig:mcts-init}
\end{figure}

\subsection{Selection} \label{MCTS:Selection}

Choosing a node to explore is straightforward: the weights of the  Gaussian mixture model provide an initial stochastic policy, which we will call $\hat{\phi}$. 
A distribution derived from these weights defines the the base strategy for player \playerOne. We will cover the update (or evolution) of this strategy in the section on backpropogation. 

After selecting a keypoint, the strategy for selection of a specific pixel, the responsibility of player \playerTwo,  is given by $\GaussianMixture_{i,x}$ and $\GaussianMixture_{i,y}$. We leave the manipulation function purposefully vague so as to not limit approaches that might have different optimization goals.

\begin{figure}[h]
    \centering
   \includegraphics[width=\textwidth]{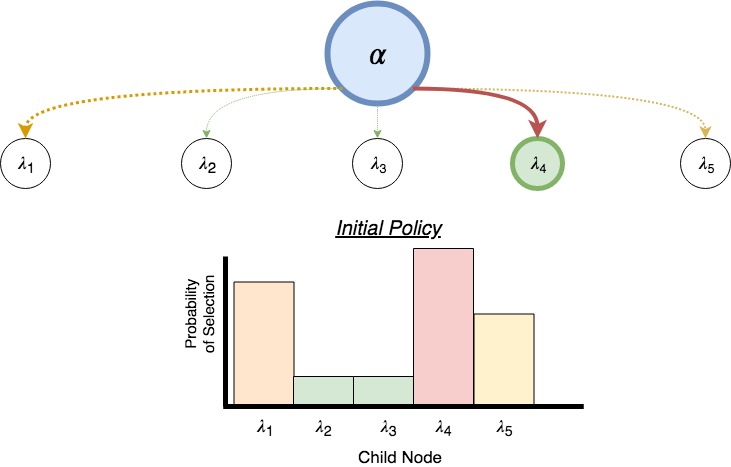}
    \caption{The selection of a node from the root depends on the policy. Here, because the root was previously unexplored this figure not only represents selection but expansion of the root node.}
    \label{fig:mcts-selection}
\end{figure}

\subsection{Expansion} \label{MCTS:Expansion}

Once we reach a node that has not yet been explored (that is, one from which we have not selected before), termed a `leaf' node, we employ the standard exploration strategy, $\hat{\phi}$, to select a new node (this is visualized in Fig.~\ref{fig:mcts-selection} as the \textit{Initial Policy}). Because this child did not previously exist in the tree, this step is known as `expansion.'

Prior to continuing the procedure, the termination conditions are checked.

\subsection{Simulation} \label{MCTS:Simulation}

Once we have selected a new leaf node and made the proper manipulation, we employ Monte Carlo simulation of the manipulation process. That is, we continuously search the tree until a termination condition is met. During this simulation, we exclusively use $\hat{\phi}$ and do not record the nodes we visit as part of the partial, Monte Carlo tree. This could be seen as using the default strategies for each player to arrive at a termination condition. 

\begin{figure}[h]
    \centering
   \includegraphics[width=\textwidth]{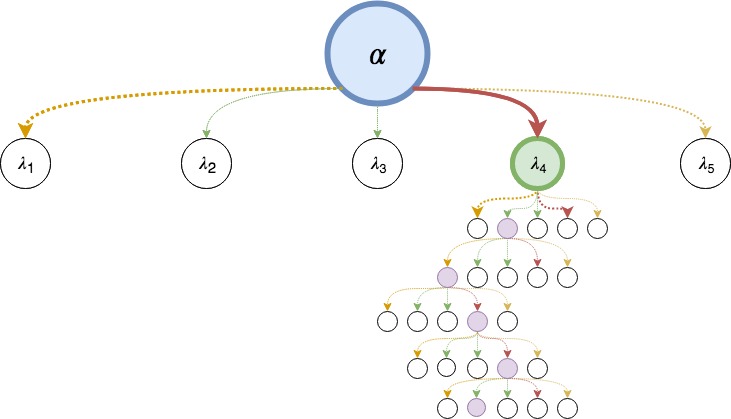}
    \caption{Following the expansion of the root, we continue to select nodes down the tree until a termination condition is satisfied. Note that at each step of the simulation we are using the initial policy and that we have colored the nodes which are not a part of the partial tree in purple.}
    \label{fig:mcts-simulation}
\end{figure}

\subsection{Backpropagation} \label{MCTS:Backpropagation}

Once a termination condition is met, we use the reward function as defined in Section \ref{SafetyThms}. The reward $r$ is then used to update each of the policies in the nodes to the current leaf based on the upper confidence bound (UCB \cite{KS2006}) equation: 

\[ \bar{r}_{j,l} + \sqrt{\dfrac{2ln(n)}{n_j}}\]

\noindent
where $\bar{r}_{j,l}$ is the mean reward after selecting the $j^{th}$ node on the $l^{th}$ level, $n$ is the number of times this nodes parent has been played and $n_j$ is the number of times this node has been selected from all the times its parent has been played. It is clear from inspection that the $j^{th}$ node will have a high confidence bound if the reward of its selection is consistently high or if it is not selected after its parent has been selected many times (i.e. $n >> n_j$).

The turn-based game interpretation of this update function is backtracking through the game play ($s_1u_1s_2u_2...$) and updating each player's strategy for exploring $u_k$ and $s_{k+1}$.
\begin{figure}[h]
\centering
\begin{subfigure}[b]{\textwidth}
    \centering
   \includegraphics[width=\textwidth]{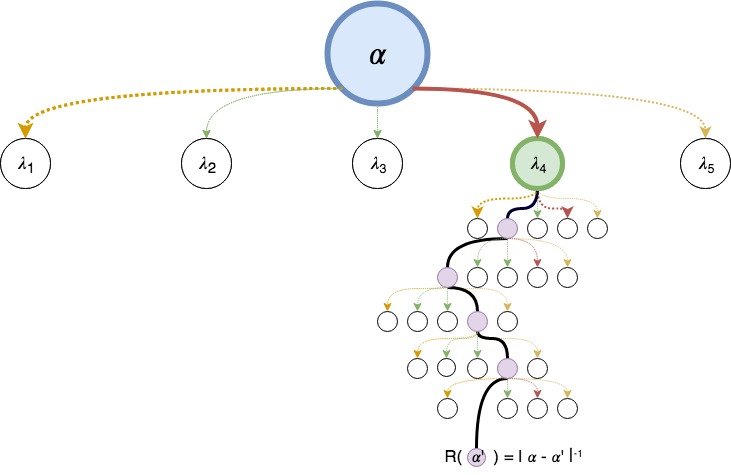}
    \caption{After a termination condition has been met, we calculate the reward of the final node (inverse of severity) and we backpropagate this reward up the tree so that we can update the probabilities of selection from the root node.}
    \label{fig:mcts-backprop}
\end{subfigure}

\begin{subfigure}[b]{\textwidth}
    \centering
   \includegraphics[width=\textwidth]{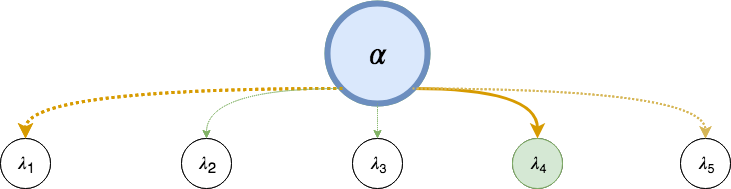}
    \caption{If the adversarial example that was detected required a severe manipulation, then the policy for exploration from the root might discourage future exploration in that direction.}
    \label{fig:mcts-backprop}
\end{subfigure}
\caption{}
\end{figure}

\begin{figure}[h]
    \centering
   \includegraphics[width=\textwidth]{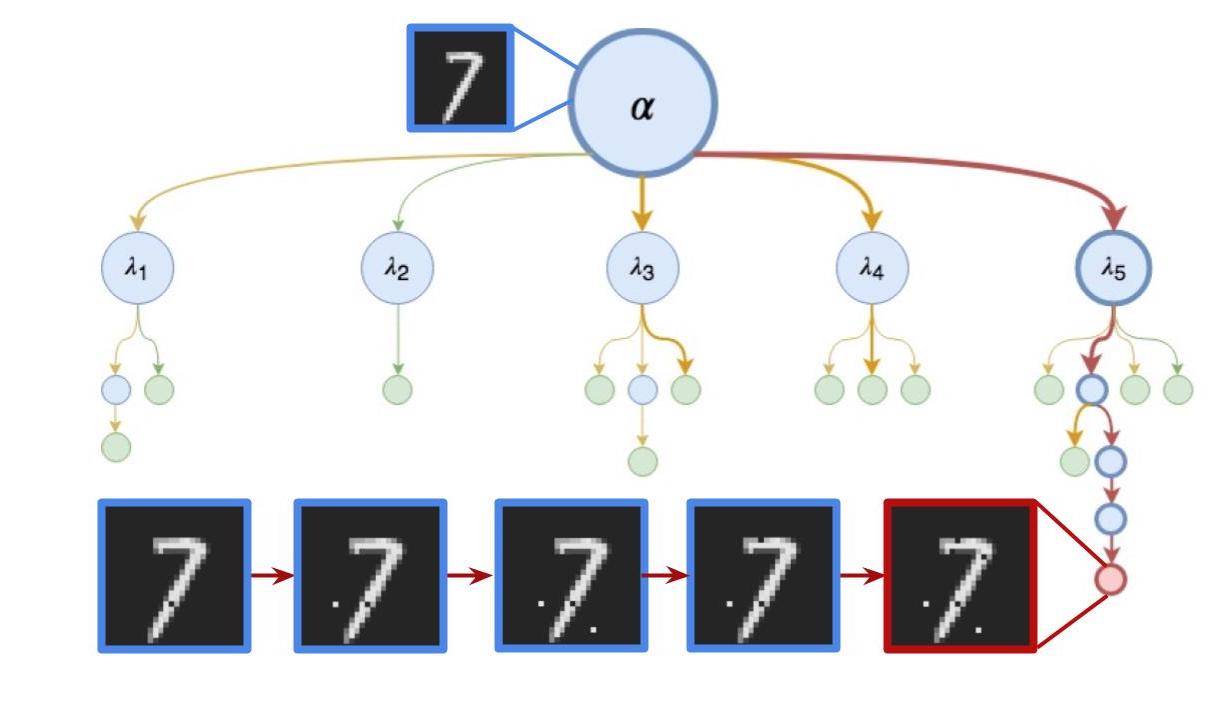}
    \caption{Pedagogical example of asymmetric Monte Carlo tree after a satisfactory adversarial example has been found. We show the detected sequence of manipulations to the adversarial example outlined in blue with red lines between them, and the final adversarial example outlined in red. Note that, though the adversarial example and the sequence of manipulations are empirical, the tree represented here is contrived for pedagogical purposes.}
    \label{fig:mcts-full}
\end{figure}


\section{$L_k$ Distance as a Manipulation Severity Metric}  \label{sec:Lk}

In general, accurately quantifying the perceptual similarity between distinct images remains a problem without a perfect solution. Accurate encapsulation of perceptual differences is central to any procedure that seeks an optimal (i.e. minimally distinct) adversarial example. In the absence of a perfect metric for measuring perceptual similarity, the $L_k$ norm has been widely adopted. Unfortunately, no one value of $k$ can be selected on principle. In this section we describe how changing the parameter $k$ alters the optimization procedure in the context of finding perceptually minimal examples.

As we can see in Fig.~\ref{fig:lkdistance}, when we speak of the ``severity" of a manipulation, it is imperative that this be thought of in the context of the particular $k$ that has been chosen, as choosing different $k$ for each image will vary the perceived severity significantly. For example, selecting small $k$ (e.g. 0 or 1) optimizes for sparse but intense changes to single pixel values; on the other hand, selecting large $k$ values optimizes for pervasive but slight changes to pixels the image. 

\begin{figure}[H]
    \centering
    \includegraphics[width=\textwidth]{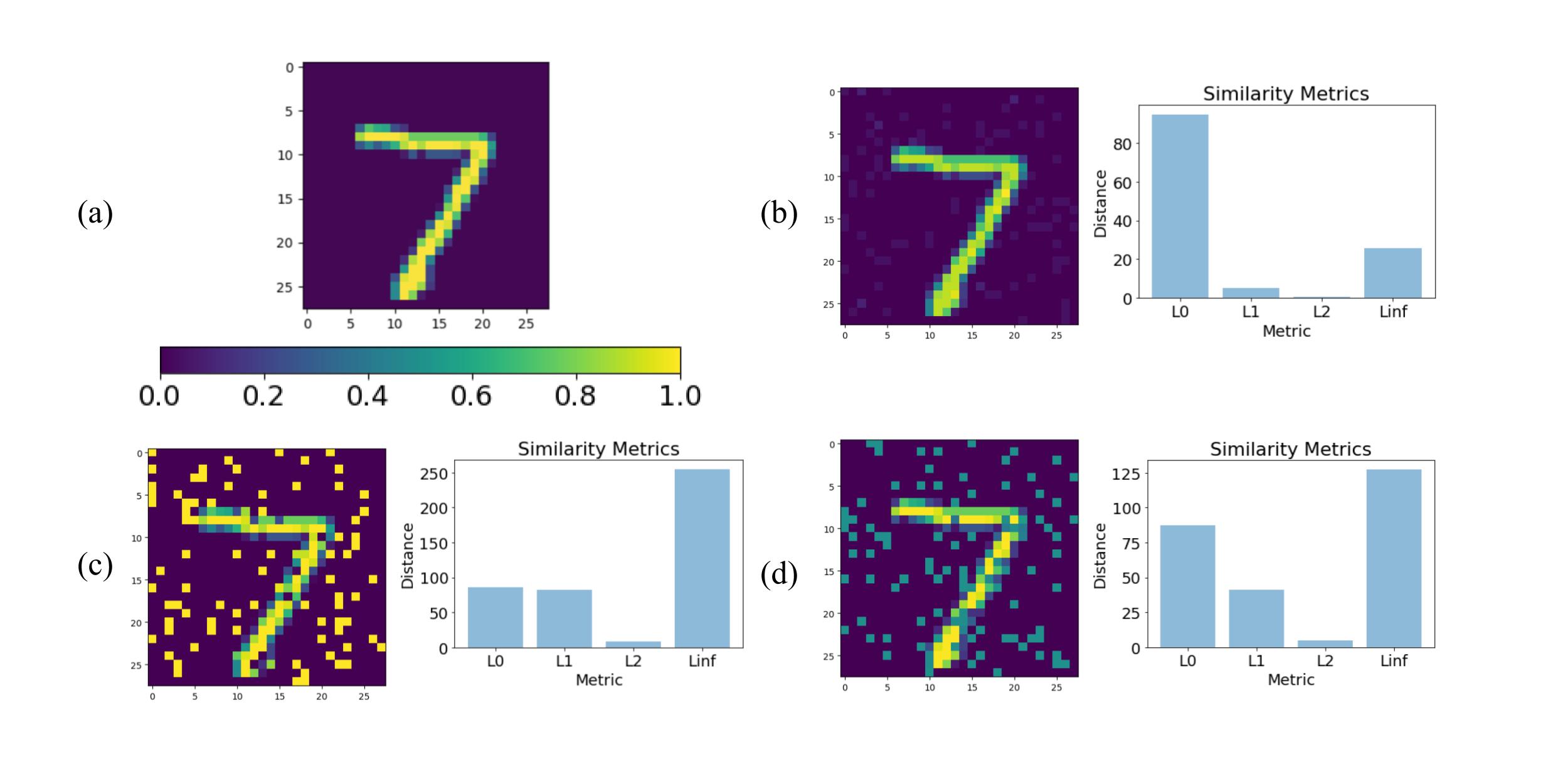}
    \caption{Comparison of $L_k$ distance metrics. (a) We can get an idea of how intense each pixel is in the original image. (b) Many pixels manipulated only slightly leads to high $L_0$ and low $L_\infty$. (c) Half the pixels manipulated in (b) but each pixel disturbed more from its original value leads to lower $L_0$ than $L_\infty$. (d) Manipulating fewer values than in (b) but more than (a) and with an intermediate intensity change exacerbates the distance between $L_0$ and $L_1$.}
    \label{fig:lkdistance}
\end{figure}

\section{Extension to General Purpose Deep Neural Networks}  \label{sec:GeneralDNNs}

The methods presented in this paper focus on the image domain, a mature application of neural networks (and one that has been proposed for safety and security-critical applications). In this section, we describe how a similar approach could be used to test neural networks with more diverse inputs. 

Given a network, and a way to semantically partition that input dimensions into subsets (used as substitutes for keypoints), one can apply the method described in this paper. Using any regular partition of the dimensions of the input data will allow for the algorithm to evolve the saliency values to arrive at a coarse-grained approximation of features in the input.


This points to the issue of how to assign initial weights in order to accurately encapsulate saliency. One advantage to the MCTS algorithm is that it will actively learn saliency even if it is supplied with a random distribution (though this will slow down convergence to an optimal strategy). Often, however, it is the case that machine learning practitioners have some domain knowledge that will allow for an \textit{a priori} assignment of saliency, even if that assignment is a very rough estimate.


\end{document}